\newtheorem{theorem}{$\mathbf{Theorem}$}
\newtheorem{definition}{$\mathbf{Definition}$}
\DeclareMathOperator*{\argmin}{arg\,min}
\begin{document}

\title{Guarding Force: Safety-Critical Compliant Control for Robot-Environment Interaction}

\author{
	\vskip 1em
	Xinming Wang, \emph{Graduate Student Member},
	Jun Yang, \emph{IEEE Fellow},
	Jianliang Mao, \emph{IEEE Member},
	\\Jinzhuo Liang, Shihua Li, \emph{IEEE Fellow}, and Yunda Yan \emph{IEEE Member}
	
	\thanks{Xinming Wang and Shihua Li are with the Key Laboratory of Measurement and Control of CSE, Ministry of Education, School of Automation, Southeast University, Nanjing, 210096, China. (email: wxm\_seu@seu.edu.cn, lsh@seu.edu.cn).}
	\thanks{Jun Yang is with the Department of Aeronautical and Automotive Engineering, Loughborough University, Loughborough LE11 3TU, UK (e-mail: j.yang3@lboro.ac.uk).}
	\thanks{Jianliang Mao and Jinzhuo Liang are with the College of Automation Engineering, Shanghai University of Electric Power, Shanghai, 201306, China (e-mail: jl\_mao@shiep.edu.cn, jz\_liang@mail.shiep.edu.cn).}
	\thanks{Yunda Yan is with the Department of Computer Science, University College London, London, WC1E 6BT, UK (e-mail: yunda.yan@ucl.ac.uk).}
}



\maketitle

\begin{abstract}
In this study, we propose a safety-critical compliant control strategy designed to strictly enforce interaction force constraints during the physical interaction of robots with unknown environments. The interaction force constraint is interpreted as a new force-constrained control barrier function (FC-CBF) by exploiting the generalized contact model and the prior information of the environment, i.e., the prior stiffness and rest position, for robot kinematics. The difference between the real environment and the generalized contact model is approximated by constructing a tracking differentiator, and its estimation error is quantified based on Lyapunov theory. By interpreting strict interaction safety specification as a dynamic constraint, restricting the desired joint angular rates in kinematics, the proposed approach modifies nominal compliant controllers using quadratic programming, ensuring adherence to interaction force constraints in unknown environments. The strict force constraint and the stability of the closed-loop system are rigorously analyzed. Experimental tests using a UR3e industrial robot with different environments verify the effectiveness of the proposed method in achieving the force constraints in unknown environments.
\end{abstract}

\def\abstractname{Note to Practitioners}
\begin{abstract}
Robotic manipulators are increasingly involved in complex interaction missions with humans and various environments. As these interactions become more prevalent, ensuring the safety of both robots and their human and environmental counterparts has become imperative. Although advanced controllers with safety constraints have been explored, ensuring strict interaction force guarantees remains challenging, particularly in unknown environments. To address this challenge, this paper proposes a new safety-critical compliant control scheme using the control barrier function (CBF) technique. By formulating the approach as quadratic programming and dynamically adjusting the nominal compliant controller online, the proposed approach acts as a safety mechanism, ensuring strict interaction force guarantees and preventing potential damage. 
\end{abstract}

\begin{IEEEkeywords}
Robot-environment interaction, unknown environment, interaction force constraint, compliant control, control barrier function.
\end{IEEEkeywords}

\section{Introduction}
In transitioning to Industry 5.0, robotic manipulators are assuming increasingly pivotal roles \cite{demir2019industry,cotta2021towards}. No longer confined to repetitive tasks, they are now expected to navigate alongside human collaborators in complex interactions within unfamiliar environments. Examples range from robotic-assisted surgeries \cite{su2019improved} to autonomous exploration \cite{calisi2007multi} and industrial operations \cite{jinno1995development}. However, as robotic manipulators enter these diverse interaction scenarios, safety emerges as a paramount concern. Ensuring the well-being of both the robot and the environment it interacts with requires compliant behavior during these interactions. Compliance enables the robot to dynamically adapt its actions, minimizing the risk of harm while enhancing efficiency and effectiveness. 

A traditional method to improve the controllability of the manipulator during interactions is to incorporate damping elements into the robot to absorb unexpected impacts when the robot interacts with the unknown environment \cite{van2009compliant,wolf2015variable}. However, the integration of additional mechanical components can pose challenges in maintenance and contribute to an increase in the overall weight and size of the robot. To avoid these inconveniences, an alternative approach is the impedance/admittance control \cite{hogan1985impedance,ott2010unified}. This method adjusts the robot's compliance by considering the desired impedance relationship between the environment and the robot end-effector. Through precise adjustment of impedance parameters, it ensures that compliance and safety requirements are met during interactions in an energy efficient way \cite{abu2020variable}, particularly when the mechanical characteristics of the environment, such as stiffness, are known.  

Nevertheless, in the majority of the aforementioned interaction scenarios, the mechanical properties of the environment, e.g., the stiffness and rest position, are typically unknown or inaccurate. In this situation, conventional impedance/interference control methods may fail to ensure essential safety conditions, such as maintaining interaction forces within a secure range. An illustrative example is the occurrence of force shocks when the robot engages in an environment that has a stiffness different from what was expected \cite{bicchi2004fast,haddadin2007safety}. To achieve the desired impedance in an unknown environment, advanced impedance approaches are investigated, especially utilizing adaptive/learning methods \cite{luo1993control,li2013impedance,liang2023adaptive} to improve control performance with contact-rich tasks by actively learning from the environment. However, they do not guarantee the explicitly strict interaction force constraint during the learning process. The other straightforward approach to preventing excessive contact force is to immediately stop the robot when the force exceeds a critical threshold. However, this method may lead to inevitable oscillations. 

To regulate the contact force during the robot's interaction with an unknown environment, this paper introduces a \textbf{S}afety-\textbf{C}ritical \textbf{C}ompliant \textbf{C}ontrol (SC$^3$) approach, using the control barrier function (CBF) technique \cite{Ames2017, ferraguti2022safety, peng2023robust}. To achieve the interaction force constraint in an unknown environment, a generalized form of the interaction contact model is initially used to approximate real-world conditions. Leveraging this model and incorporating prior knowledge about the environment, such as stiffness and rest position, the environment uncertainty and mismatch are estimated by constructing a tracking differentiator and quantifying estimation errors via Lyapunov theory. Then, taking into account the impacts caused by the uncertainty of the contact model and the estimation error, we formulate a force-constrained CBF (FC-CBF) for the kinematics of the robot, restricting the desired joint angular velocity to achieve the strict force constraint. By modifying well-designed compliant controllers like the impedance/admittance control subject to the dynamic constraints, the strict interaction force constraints are guaranteed during robot interactions in unknown environments. Rigorous analysis of interaction force constraints and the stability of the manipulator is also presented. 

Our method is systematically evaluated in real-world experiments to validate its effectiveness. In the experimental test, the interaction task is conducted on the UR3e industrial robot with different physical characteristics, including elasticity and viscoelasticity. Compared with conventional admittance control, the results demonstrate the effectiveness of the proposed method in achieving strict force constraints in unknown environments. Furthermore, to verify the adaptability of the proposed approach, it is also applied to force control tasks with a hybrid characteristic environment.

Additionally, we also discuss the feasibility of extending our approach to the dynamic system of the robotic manipulator, where a new FC-CBF is designed to impose constraints on the driving torques of each joint. The contributions of this work are summarized as follows:
\begin{enumerate}
	\item We propose the safety-critical compliant control strategy for the robot-environment interaction by leveraging the CBF technique. The proposed approach acts as a flexible safety patch for most compliant control methodologies, avoiding the complexity of controller redesign. 
	\item Using prior knowledge of the stiffness matrix and rest position, we develop an effective uncertainty estimator aimed at dynamically approximating the impacts arising from the complex environment uncertainties. Furthermore, the estimation error is rigorously quantified using the Lyapunov theory under mild assumptions. 
	\item Through extensive experimentation, employing various mechanical materials such as springs and sponges, conducted on a UR3e robot, we validate the efficacy of our proposed approach. In addition, we perform force control tasks to demonstrate the adaptability and versatility of our approach.
\end{enumerate}

\section{Related work}
To meet the safety requirements necessary for the interaction between the robot and the environment, researchers have explored various approaches in existing literature. In this section, we offer a concise overview of some relevant studies in this domain. 
\subsection{Passive compliant control}
A direct method to improve safety and mitigate unexpected impacts during interaction involves specific design strategies for robotics \cite{van2009compliant,wolf2015variable}. For example, an elastomeric shear pad-based compliance device has been incorporated into the end effectors to facilitate safe peg-in-hole tasks \cite{southern2002study}. Passive compliance can also be achieved by employing passive mechanisms to generate suitable reactions to applied forces in each joint. A typical example is the series elastic actuator \cite{pratt1995series,zhong2021toward}, where a spring is connected in series with a stiff actuator. However, while introducing a spring improves compliance, its stiffness is typically fixed and determined by the spring, limiting adaptability during operation and restricting practical application. To address this limitation, the concept of variable impedance or stiffness actuators has been proposed and applied in various robotic applications, including children's toys \cite{stiehl2005design,saldien2006anty}, walking/running robots \cite{huang2012step,liu2019impedance}, and robot hands for throwing tasks \cite{wolf2008new,grebenstein2011dlr}. Nevertheless, the increased mechanical complexity associated with these designs may pose challenges in terms of maintenance, overall weight increase, dynamics modeling, and controller design.

\subsection{Active compliant control}
Unlike passive compliance control that relies on adding damping elements or changing actuators, achieving safe and compliant behavior can also be achieved by developing active control algorithms to adjust the compliance characteristics of the robot \cite{schumacher2019introductory}. One such approach is the hybrid force/position control method, where the entire task is divided into the position-controlled task and the force-controlled task \cite{reibert1981hybrid,xie2023bi}. This approach allows for regulating the interaction force along a specific direction when there is prior knowledge of the structure and geometry of the environment. However, this approach has limitations in unstructured or dynamic environments. Another typical active compliance control method is impedance/admittance control \cite{hogan1985impedance,ott2010unified}, where the impedance relationship is approximated as a mass-spring-damping system with specialized impedance parameters. Once the environment is known, the desired compliance and safety requirements can be met by adjusting these parameters. In the presence of unknown environments, adaptive/learning algorithms have been developed. For example, when the dynamic environment has parameter uncertainties, an adaptive algorithm is proposed for robot manipulators to autonomously improve the control performance \cite{luo1993control}. In \cite{li2013impedance}, an impedance learning method is proposed to iteratively adjust the impedance parameters of the robot arm so that the desired impedance model is obtained in changing unknown environments. Although adaptability to unfamiliar environments is enhanced, there is still no explicitly strict interaction force constraint during the learning and manipulation processes.

\subsection{CBF-based safety-critical control for robot}
Control barrier functions (CBF)-based control has emerged as an effective tool for formally guaranteeing the safety of nonlinear systems through the establishment of a forward invariant set \cite{Ames2017, ferraguti2022safety, peng2023robust}. Due to its ability to describe complex nonlinear constraints and its relatively low computational requirements, CBF-based control has seen significant advances in both theoretical research \cite{jankovic2018robust,Xiao2021} and its applications in robotics, including obstacle avoidance \cite{nguyen2021robust}, robotic grasping \cite{cortez2019control}, wire-borne brachiating robots \cite{farzan2022adaptive}, and human-robot collaboration \cite{shi2023dual}. In particular, in the realm of bipedal robot locomotion, robust CBFs are utilized to constrain foot-step displacements under different carrying loads for the RABBIT bipedal robot \cite{nguyen2021robust}. In the field of robot grasping \cite{cortez2019control}, CBFs are used to interpret constraints related to contact forces, joint angles, and contact locations. In the presence of strong uncertainty and unmodeled dynamics, robust CBFs are deployed to restrict the position of a swing gripper for safe brachiation maneuvers by a cable-suspended underactuated robot \cite{farzan2022adaptive}. According to the ISO/TS 15066 technical specification,a new CBF is developed to limit robot velocity and ensure human safety in human-robot collaboration scenarios \cite{shi2023dual}. In contrast to the distance/velocity-based safety constraints addressed using various CBF designs, our previous work \cite{liang2023control} focused on achieving strict force constraints. This was accomplished through the development of a new CBF, using a known-environment model that includes the stiffness matrix and the rest position of the environment. Building upon this, this paper extends the investigation to explore interaction force constraints in unknown environments.

\section{Mathematical notation}
The sets of real numbers and nonnegative integers are denoted as $\mathbb{R}$ and $\mathbb{N}$. For $i,j\in\mathbb{N}$ satisfying $j\leq k$, define $\mathbb{N}_{j:k}=\{ j,j+1,\cdots,k \}$ as a subset of $\mathbb{N}$. The identity matrix is denoted as $\boldsymbol{I}_n$ with size $n$. For any vectors $\boldsymbol{x}, \boldsymbol{y} \in\mathbb{R}^n$, $\boldsymbol{x} \leq \boldsymbol{y}$ corresponds to the element-wise inequality between vectors $\boldsymbol{x}$ and $\boldsymbol{y}$, i.e., $x_i\leq y_i, i\in\mathbb{N}_{1:n}$. A continuous function $\alpha_e:(-b, a) \rightarrow (-\infty,\infty)$ is said to belong to extended class $\mathcal{K}$ function for some $a, b >0$ if it is strictly increasing and $\alpha_e(0)=0$. Define $\Vert \boldsymbol{x} \Vert$ as the 2-norm of vector $\boldsymbol{x}$.
\section{Preliminaries and problem formulation}
In this section, the mathematical model of the manipulator, a brief introduction of the control barrier function, and the modeling of the environment interaction will be presented subsequently to facilitate the design of the proposed safety-critical compliant control strategy.
\subsection{Mathematical model of the manipulator}
The dynamic model of the serial robot can be presented as follows.
\begin{equation}\label{robot_dyqsys}
	\boldsymbol{M}(\boldsymbol{q})\boldsymbol{\ddot{q}} + \boldsymbol{C}(\boldsymbol{q}, \dot{\boldsymbol{q}})\dot{\boldsymbol{q}} + \boldsymbol{G}(\boldsymbol{q}) = \boldsymbol{\tau} - \boldsymbol{\tau}_e,
\end{equation}
where $\boldsymbol{q}, \dot{\boldsymbol{q}}\in\mathbb{R}^n$ are the vectors of joint angles' positions and velocities and $n$ is the degree-of-freedom, $\boldsymbol{\tau},\boldsymbol{\tau}_e \in\mathbb{R}^n$ are the operation torque and external torque generated by interaction with environment/human, respectively. The positive definite matrix $\boldsymbol{M}(\boldsymbol{q})$ is the inertial matrix, $\boldsymbol{C}(\boldsymbol{q},\dot{\boldsymbol{q}})\dot{\boldsymbol{q}}$ represents the effect caused by the centrifugal and Coriolis forces, and $\boldsymbol{G}(\boldsymbol{q})$ denotes the gravity torque on the robot. According to the principle of virtual work, the external torque $\boldsymbol{\tau}_e$ can be rewritten as
\begin{equation}
	\boldsymbol{\tau}_e = \boldsymbol{J}^T(\boldsymbol{q})\boldsymbol{h}_e,
\end{equation}
where $\boldsymbol{h}_e:=[\boldsymbol{f}_e^T, \boldsymbol{\mu}_e^T]^T$ represents the interaction force and moment exerted on the end-effector, and $\boldsymbol{f}_e$ and $\boldsymbol{\mu}_e$ denote the vectors of external end-effector force and moment, respectively. 

During the interaction, the physical interaction is usually described in Cartesian space. Define $\boldsymbol{x}\in\mathbb{R}^m,\;m>0$ as the pose of the end effector in a Cartesian space. It can be presented by using the forward kinematics transformation $\boldsymbol{x} = \boldsymbol{K}(\boldsymbol{q})$, where $\boldsymbol{K}(\boldsymbol{q})$ is the transform matrix from the base to the end effector. It should be noted that the matrix $\boldsymbol{K}(\boldsymbol{q})$ is dependent on the geometry structure of the robot. The velocity of the end effector in Cartesian space is presented as
\begin{equation}\label{ef_vel}
	\dot{\boldsymbol{x}} = \boldsymbol{J}(\boldsymbol{q})\dot{\boldsymbol{q}},
\end{equation}
where $\boldsymbol{J}({\boldsymbol{q}})$ is the Jacobian matrix. For most industrial robots, the dynamic loop is not open for the user to directly design the driving torque for each joint. In this situation, $\dot{\boldsymbol{q}}$ can be regarded as the control input to be designed. 
\subsection{CBF-based safety control}
In the following, the base of control barrier function-based safety control is introduced. Consider the following nonlinear system
\begin{equation}
	\dot{\boldsymbol{x}} = \boldsymbol{f}(\boldsymbol{x}) + \boldsymbol{g}(\boldsymbol{x})\boldsymbol{u},
	\label{cbf_sys}
\end{equation}
where $\boldsymbol{x}\in \mathbb{X} \subseteq \mathbb{R}^n$, $\boldsymbol{u}\in \mathbb{U} \subseteq \mathbb{R}^m$ are the states and the control inputs, $\mathbb{X}$ and $\mathbb{U}$ are the admissible sets of state and input. The nonlinear functions $\boldsymbol{f}:\mathbb{R}^n\rightarrow\mathbb{R}^n$, $\boldsymbol{g}:\mathbb{R}^n\rightarrow\mathbb{R}^{n\times m}$ are Lipschitz continuous functions. Then, for any initial condition $\boldsymbol{x}(t_0)\in\mathbb{R}^n$, there exists a maximum time interval $I(\boldsymbol{x}(t_0))= [t_0,\tau_{max})$ such that $\boldsymbol{x}(t)$ is the unique solution to system (\ref{cbf_sys}) on $I(\boldsymbol{x}(t_0))$. We assume that there exists a well-designed controller $\boldsymbol{u}_{0}:\mathbb{X}\rightarrow \mathbb{U}$ achieving the stability control tasks.

To describe the desired safety specification, consider a continuously differentiable function $b:\mathbb{R}^n\rightarrow\mathbb{R}$ and define the following 0-superlevel set 
\begin{equation}
	\begin{aligned}
		C = \{ \boldsymbol{x}\in\mathbb{R}^n \vert b(\boldsymbol{x}) \geq 0 \}, 
	\end{aligned}
\end{equation}
where $\partial C:= \{ \boldsymbol{x}\in\mathbb{R}^n \vert b(\boldsymbol{x})=0 \}$ and ${\rm Int}(C):= \{ \boldsymbol{x}\in\mathbb{R}^n \vert b(\boldsymbol{x})>0 \}$ are the boundary and interior of the set $C$. In the framework of CBF, the system (\ref{cbf_sys}) is safe with respect to the set $C$, if there exists a control input $\boldsymbol{u}$ makes the set $C$ a forward invariant set, i.e., the trajectory $\boldsymbol{x}(t)\in C, \; \forall t>0$, if $\boldsymbol{x}(0) \in C$.
\begin{definition}{\cite{Ames2017}}
	Considering the nonlinear system (\ref{cbf_sys}), the continuously differentiable function $b(\boldsymbol{x})$ is a CBF, if there exists an extended class $\mathcal{K}$ function $\alpha_e(\cdot)$ subject to
	\begin{equation}
		\sup_{u\in\mathbb{U}} \{ L_{\boldsymbol{f}}b(\boldsymbol{x}) + L_{\boldsymbol{g}}b(\boldsymbol{x})\boldsymbol{u} + \alpha_e(b(\boldsymbol{x})) \}\geq 0, \; \forall \boldsymbol{x}\in\mathbb{X},
	\end{equation} 
	where $L_{\boldsymbol{f}}b(\boldsymbol{x}) := \frac{\partial b(\boldsymbol{x})}{\partial \boldsymbol{x}}\boldsymbol{f}(\boldsymbol{x})$ and $L_{\boldsymbol{g}}b(\boldsymbol{x}) := \frac{\partial b(\boldsymbol{x})}{\partial \boldsymbol{x}}\boldsymbol{g}(\boldsymbol{x})$ are standard Lie derivatives. 
\end{definition}
If there exists a valid CBF $b(\boldsymbol{x})$, then the CBF-based safety controller can be obtained by solving the following quadratic programming.
\begin{equation}\label{cbf_based}
	\begin{aligned}
		\argmin_{u\in \mathbb{\mathbb{U}}}& \; \frac{1}{2}\Vert\boldsymbol{u} - \boldsymbol{u}_0\Vert^2,\\
		&{\rm{s.t.}}\; L_{\boldsymbol{f}}b(\boldsymbol{x}) + L_{\boldsymbol{g}}b(\boldsymbol{x})\boldsymbol{u} + \alpha_e(b(\boldsymbol{x})) \geq 0.
	\end{aligned}
\end{equation}
In this sense, the safety task described by the invariance of the set $C$ can be guaranteed by modifying the nominal controller $\boldsymbol{u}_0$ in a minimal way. The typical trajectory of system (\ref{cbf_sys}) under controller (\ref{cbf_based}) is presented in Fig. \ref{fig1}(a).
\begin{figure}[t]
	\centering
	\vspace{0pt}
	\includegraphics[width=0.4\textwidth]{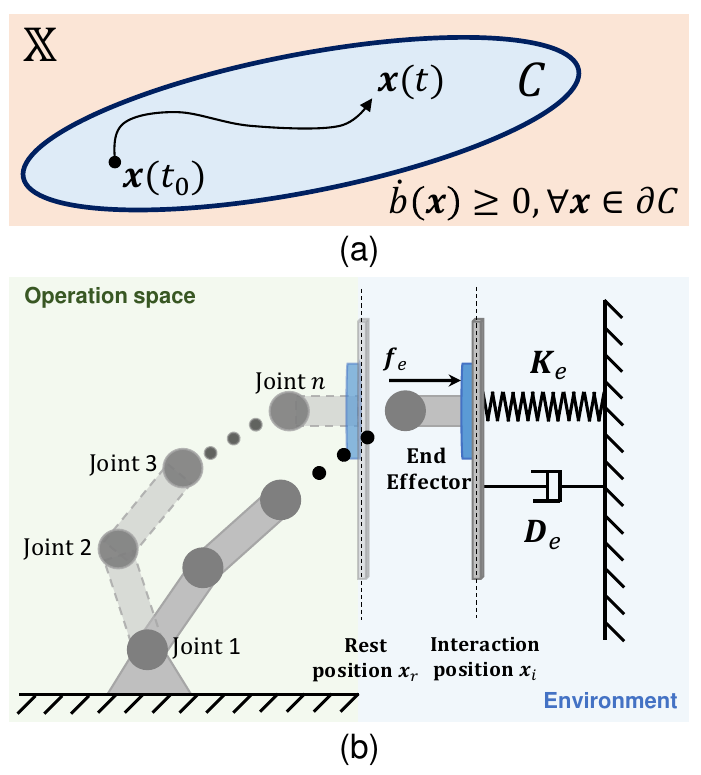}
	\caption{Illustrations of the notion of CBF and a typical physical interaction task: (a) Under the control action generated by (\ref{cbf_based}), the trajectory $\boldsymbol{x}(t)\in C,\;\forall t\geq 0$. Moreover, any $\boldsymbol{x}\in\partial C$, the controller (\ref{cbf_based}) will make $\boldsymbol{x}(t)$ move forward the ${\rm{Int}} C$; (b) The illustrative robot-environment interaction task with spring and damper environment.}
	\label{fig1}
\end{figure}
\subsection{Environment model and problem formulation}
In the most of compliant control approaches for robot-environment physical interaction, the interaction environment model is usually regarded as a pure `spring-like' environment \cite{siciliano1999robot}, and the interaction force and moment $\boldsymbol{h}_e$ can be presented as 
\begin{equation}
	\boldsymbol{h}_e = \left\{
	\begin{array}{cc}
		\boldsymbol{K}_e\Delta\boldsymbol{x},\; & \textbf{in contact} \\
		\boldsymbol{0},\; & \textbf{separated}
	\end{array}
	\right.,
\end{equation}
where $\boldsymbol{K}_e$ and $\Delta \boldsymbol{x}:=\boldsymbol{x}_i - \boldsymbol{x}_r$ are the environment's stiffness matrix and its deformation, $\boldsymbol{x}_i$ is the interaction surface position and $\boldsymbol{x}_r$ is the rest position. However, this model cannot accurately capture the physical characteristics of some materials with viscoelasticity such as sponge, wood, and rubber \cite{de2012viscoelasticity,morro2017modelling}, which are common in real life. In this regard, we consider the following generalized environment in this paper
\begin{equation}\label{env}
	\boldsymbol{h}_e = \left\{
	\begin{array}{cc}
		\boldsymbol{K}_e\Delta\boldsymbol{x} + \boldsymbol{\sigma}_e(t, \Delta\boldsymbol{x}, \Delta\dot{\boldsymbol{x}}),\; & \textbf{in contact} \\
		\boldsymbol{0},\; & \textbf{separated}
	\end{array}
	\right.,
\end{equation}
where $\boldsymbol{\sigma}_e(t, \Delta\boldsymbol{x}, \Delta\dot{\boldsymbol{x}})$ is the nonlinear function describing the viscosity or unmodelled interaction characteristics. In an unknown environment, the real information of $\boldsymbol{K}_e$, $\boldsymbol{x}_r$ and $\boldsymbol{\delta}_e$ is difficult to obtain exactly or estimate before the interaction \cite{erickson2003contact}, dramatically increasing the difficulty of preventing contact force violation. An illustrative robot-environment interaction task with the spring and damper environment is shown in Fig. \ref{fig1}(b), where $\boldsymbol{K}_e$ and $\boldsymbol{D}_e$ are the unknown stiffness and damping matrices of the environment. 

The purpose of this study is to design a safety-critical control approach for the manipulator system (\ref{ef_vel}) to regulate the motion of a robot to guarantee the strict interaction force constraint in the unknown environment described by the interaction model (\ref{env}). 
\section{Safety-critical compliant control strategy}
In the following, we assume that there exists a known prior stiffness matrix $\boldsymbol{K}_e^{pri}$ representing the expected stiffness characteristics of the contacted surface and an prior rest position of the environment $\boldsymbol{x}_r^{pri}$. The proposed compliant control strategy is first proposed for the kinematics of the manipulator (\ref{ef_vel}). With knowledge of $\boldsymbol{K}_e^{pri}$ and $\boldsymbol{x}_r^{pri}$, the generalized interaction force can be represented as 
\begin{equation}\label{env2}
	\boldsymbol{h}_e = \left\{
	\begin{array}{cc}
		\boldsymbol{K}_e^{pri}(\boldsymbol{x} - \boldsymbol{x}_r^{pri}) + \boldsymbol{\delta}_e(t, \Delta\boldsymbol{x}, \Delta\dot{\boldsymbol{x}}),\; & \textbf{in contact} \\
		\boldsymbol{0},\; & \textbf{separated}
	\end{array}
	\right.,
\end{equation}
with
\begin{equation}
	\begin{aligned}
		\boldsymbol{\delta}_e(t, \Delta\boldsymbol{x}, \Delta\dot{\boldsymbol{x}}) =& (\boldsymbol{K}_e - \boldsymbol{K}_e^{pri})\boldsymbol{x}-\boldsymbol{K}_e\boldsymbol{x}_r+\boldsymbol{K}_e^{pri}\boldsymbol{x}_r^{pri}\\
		& + \boldsymbol{\sigma}_e(t, \Delta\boldsymbol{x}, \Delta\dot{\boldsymbol{x}}),
	\end{aligned}
\end{equation}
where $\boldsymbol{\delta}_e(t, \Delta\boldsymbol{x}, \Delta\dot{\boldsymbol{x}})$ is the lumped interaction uncertainty incorporating inaccurately mechanical mismatch as well as the unmodelled interaction characteristics. It should be noted that $\boldsymbol{\delta}_e$ is measurable if the robot has the force and moment sensor, that is, $\boldsymbol{\delta}_e = \boldsymbol{h}_e^{mea} - \boldsymbol{K}_e^{pri}(\boldsymbol{x} - \boldsymbol{x}_r^{pri})$, where $\boldsymbol{h}_e^{mea}$ is the measured external force/moment obtained by the sensor.
\subsection{Tracking differentiator design}
Before introducing the proposed interaction force constrained CBF, the following tracking differentiator is constructed to estimate the time derivative of $\boldsymbol{\delta}_e(t, \Delta\boldsymbol{x}, \Delta\dot{\boldsymbol{x}})$, which will be used in the force-constrained CBF design. 
\begin{equation}\label{td_sys}
	\begin{aligned}
		\dot{\boldsymbol{z}}_1 &= \boldsymbol{z}_2 + \boldsymbol{L}_1(\boldsymbol{\delta}_e - \boldsymbol{z}_1),\\
		\dot{\boldsymbol{z}}_2 &= \boldsymbol{L}_2(\boldsymbol{\delta}_e - \boldsymbol{z}_1),
	\end{aligned}
\end{equation}
where $\boldsymbol{z}_1$ and $\boldsymbol{z}_2$ are the estimates of $\boldsymbol{\delta}_e$ and $\dot{\boldsymbol{\delta}}_e$, $\boldsymbol{L}_1$ and $\boldsymbol{L}_2$ are the positive definite diagonal matrices to be designed. Defining the estimation error as $\tilde{\boldsymbol{Z}} = [\tilde{\boldsymbol{z}}_1^T, \tilde{\boldsymbol{z}}_2^T]^T$ and $ \tilde{\boldsymbol{z}}_1 = \boldsymbol{\delta}_e - \boldsymbol{z}_1,\; \tilde{\boldsymbol{z}}_2=\dot{\boldsymbol{\delta}}_e - \boldsymbol{z}_2$, it is obtained
\begin{equation}\label{td_errsys}
	\begin{aligned}
		\dot{\tilde{\boldsymbol{Z}}} = \boldsymbol{A}\tilde{\boldsymbol{Z}} + \boldsymbol{\delta},
	\end{aligned}
\end{equation}
with 
\begin{displaymath}
	\begin{aligned}
		\boldsymbol{A} &= \begin{bmatrix}
			-\boldsymbol{L}_1 & \boldsymbol{I}_6 \\
			-\boldsymbol{L}_2 & \boldsymbol{0}
		\end{bmatrix}, \boldsymbol{\delta} = \begin{bmatrix}
			\boldsymbol{0}_{6\times 1} \\
			\ddot{\boldsymbol{\delta}}_e
		\end{bmatrix}.
	\end{aligned}
\end{displaymath}
Assuming that there exists a known positive constant $\beta\in\mathbb{R}^+$ satisfying $\Vert \boldsymbol{\delta}(t)\Vert\leq \beta,\; \forall t>0$, then we can construct a monotonically decreasing function to quantify the estimation error $\tilde{\boldsymbol{Z}}$. Considering the Lyapunov function $V=\frac{1}{2}\tilde{\boldsymbol{Z}}^T\tilde{\boldsymbol{Z}}$, it yields
\begin{equation}\label{td_dV}
	\begin{aligned}
		\dot{V} &= \tilde{\boldsymbol{Z}}^T\boldsymbol{A}\tilde{\boldsymbol{Z}} + \tilde{\boldsymbol{Z}}^T\boldsymbol{\delta}\\
		&\leq -2(\alpha - \frac{\epsilon}{2})V + \frac{\beta^2}{2\epsilon},
	\end{aligned}
\end{equation}  
where $\alpha$ is a positive parameter satisfying $\tilde{\boldsymbol{Z}}^T\boldsymbol{A}\tilde{\boldsymbol{Z}}\leq -\alpha\Vert\tilde{\boldsymbol{Z}}\Vert^2$ and $\epsilon>0$. From (\ref{td_dV}), the following estimation error bound can be constructed.
\begin{equation}\label{td_errbound}
	\Vert \tilde{\boldsymbol{Z}}(t)\Vert \leq \sqrt{\Vert\boldsymbol{Z}(0)\Vert^2 e^{-at} + (1 - e^{-at})b}:=\bar{z}(t),
\end{equation} 
where $a = 2(\alpha - \frac{\epsilon}{2})$ and $b = \frac{\beta^2}{2\alpha\epsilon - \epsilon^2}$.
\begin{figure}[!t]
	\centering
	\vspace{0pt}
	\includegraphics[width=0.45\textwidth]{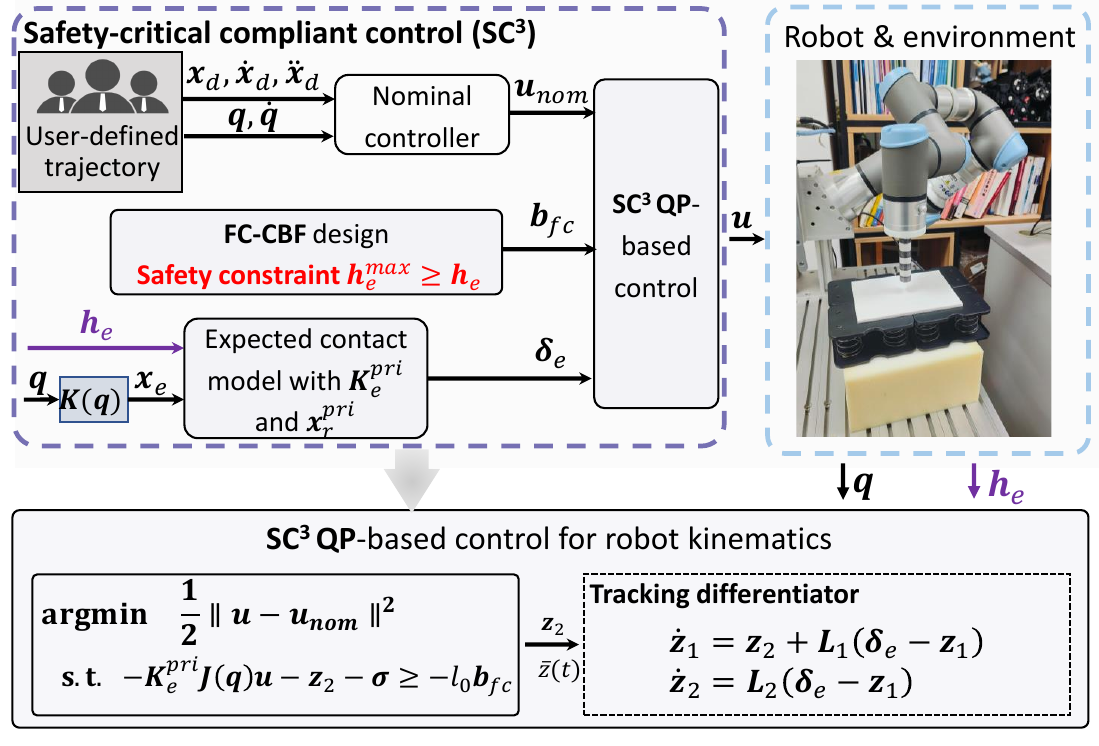}
	\caption{Schematic block of the proposed safety-critical compliant control strategy.}
	\label{fig2}
\end{figure}
\subsection{Force-constrained control barrier function design}
The strict interaction force constraint is described as the following piece-wise continuous function
\begin{equation}\label{FC_CBF}
	\boldsymbol{b}_{fc}(t,\Delta \boldsymbol{x}) = \left\{
	\begin{array}{cc}
		\boldsymbol{h}_e^{max} - \boldsymbol{h}_e,\; & \textbf{in contact} \\
		\boldsymbol{0},\; & \textbf{separated}
	\end{array}
	\right.,
\end{equation}
where $\boldsymbol{h}_e^{max}\in\mathbb{R}^6$ is the maximum limit of the interaction forces and moments. Inspired by the CBF, if there exists a control $\dot{\boldsymbol{q}}(t)$ such that the function $\boldsymbol{b}_{fc}$ is always positive, then the interaction force constraint can be guaranteed. The set of relative invariances with respect to $\boldsymbol{x}$ is defined as $\boldsymbol{C}_0=\{ [\boldsymbol{x}^T,\dot{\boldsymbol{x}}^T]^T\in\mathbb{R}^{12}\vert \boldsymbol{b}_{fc}(t,\Delta\boldsymbol{x})\geq 0 \}$. The proposed FC-CBF design is now presented as follows.
\begin{definition}
	(FC-CBF for manipulator kinematics) Consider the kinematic system (\ref{ef_vel}) and the tracking differentiator (\ref{td_sys}). The function $\boldsymbol{b}_{fc}$ is the FC-CBF for the system (\ref{ef_vel}) during interaction, if there exists a positive parameter $l$ such that the following condition holds.
	\begin{equation}
		\sup_{\dot{\boldsymbol{q}}\in\mathbb{R}^m}
			\{-\boldsymbol{K}_e^{pri}\boldsymbol{J}(\boldsymbol{q})\dot{\boldsymbol{q}} - \boldsymbol{z}_2 - \boldsymbol{\sigma} + l\boldsymbol{b}_{fc}(t,\Delta \boldsymbol{x}) \}\geq 0,			
		\label{FCCBF1}
	\end{equation}
	where $\boldsymbol{\sigma} = [\bar{z}(t),\bar{z}(t)\ldots,\bar{z}(t)]^T$.
\end{definition}
Following the virtues of CBF, we can prove that the interaction force constraint can be achieved with a valid FC-CBF.
\begin{theorem}
	Consider the kinematic system (\ref{ef_vel}) and the tracking differentiator (\ref{td_sys}). Using $\boldsymbol{b}_{fc}$ defined in (\ref{FC_CBF}), any Lipschitz continuous controller belonging to $\mathcal{K}_k = \{ \dot{\boldsymbol{q}}\in\mathbb{R}^m\vert -\boldsymbol{K}_e^{pri}\boldsymbol{J}(\boldsymbol{q})\dot{\boldsymbol{q}} - \boldsymbol{z}_2 - \boldsymbol{\sigma}(t) + l\boldsymbol{b}_{fc}(t,\Delta \boldsymbol{x}) \geq 0 \}$ guarantees the strict force constraint during the physical interaction.
\end{theorem}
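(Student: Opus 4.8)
The plan is to show that membership in $\mathcal{K}_k$ forces a differential inequality on $\boldsymbol{b}_{fc}$ that keeps it in the nonnegative orthant, and then to convert the pointwise force bound $\boldsymbol{h}_e \leq \boldsymbol{h}_e^{max}$ into the claimed strict constraint. First I would differentiate $\boldsymbol{b}_{fc}$ along the trajectory during contact: from (\ref{FC_CBF}) and (\ref{env2}), $\dot{\boldsymbol{b}}_{fc} = -\dot{\boldsymbol{h}}_e = -\boldsymbol{K}_e^{pri}\dot{\boldsymbol{x}} - \dot{\boldsymbol{\delta}}_e = -\boldsymbol{K}_e^{pri}\boldsymbol{J}(\boldsymbol{q})\dot{\boldsymbol{q}} - \dot{\boldsymbol{\delta}}_e$, using the kinematic relation (\ref{ef_vel}). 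I would then replace $\dot{\boldsymbol{\delta}}_e$ by its estimate $\boldsymbol{z}_2$ plus the error term $\tilde{\boldsymbol{z}}_2 = \dot{\boldsymbol{\delta}}_e - \boldsymbol{z}_2$, so that $\dot{\boldsymbol{b}}_{fc} = -\boldsymbol{K}_e^{pri}\boldsymbol{J}(\boldsymbol{q})\dot{\boldsymbol{q}} - \boldsymbol{z}_2 - \tilde{\boldsymbol{z}}_2$. Since $\Vert\tilde{\boldsymbol{Z}}(t)\Vert \leq \bar{z}(t)$ by (\ref{td_errbound}), each component satisfies $|\tilde{z}_{2,i}| \leq \bar{z}(t)$, hence $-\tilde{\boldsymbol{z}}_2 \geq -\boldsymbol{\sigma}(t)$ componentwise with $\boldsymbol{\sigma}(t) = [\bar{z}(t),\ldots,\bar{z}(t)]^T$. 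Combining this with the defining inequality of $\mathcal{K}_k$ gives, componentwise, $\dot{\boldsymbol{b}}_{fc} \geq -\boldsymbol{K}_e^{pri}\boldsymbol{J}(\boldsymbol{q})\dot{\boldsymbol{q}} - \boldsymbol{z}_2 - \boldsymbol{\sigma}(t) \geq -l\,\boldsymbol{b}_{fc}(t,\Delta\boldsymbol{x})$.

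Next I would run the comparison-lemma argument on each scalar component $b_{fc,i}$: the inequality $\dot{b}_{fc,i} \geq -l\, b_{fc,i}$ yields $b_{fc,i}(t) \geq b_{fc,i}(t_c)e^{-l(t-t_c)}$ where $t_c$ is the instant contact begins, so if $b_{fc,i}(t_c) \geq 0$ then $b_{fc,i}(t) \geq 0$ for all $t$ in the contact phase, i.e., $\boldsymbol{C}_0$ is forward invariant during contact. I would then argue the initial condition: at the moment of contact the deformation $\Delta\boldsymbol{x}$ and velocity are essentially zero (or small), so $\boldsymbol{h}_e$ is below $\boldsymbol{h}_e^{max}$ and $\boldsymbol{b}_{fc}(t_c,\cdot)\geq 0$ holds; in the separated phase $\boldsymbol{b}_{fc}=\boldsymbol{0}$ and $\boldsymbol{h}_e=\boldsymbol{0}\leq\boldsymbol{h}_e^{max}$ trivially. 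Finally, unwinding the definition (\ref{FC_CBF}), $\boldsymbol{b}_{fc}(t,\Delta\boldsymbol{x})\geq\boldsymbol{0}$ during contact is exactly $\boldsymbol{h}_e \leq \boldsymbol{h}_e^{max}$, which is the strict interaction force constraint claimed.

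Two points will require care, and the second is the main obstacle. The lesser issue is the handling of the switching between the separated and contact modes: $\boldsymbol{b}_{fc}$ is only piecewise continuous, so I must verify that every contact episode is initialized with $\boldsymbol{b}_{fc}\geq\boldsymbol{0}$ — this is where an assumption that $\boldsymbol{h}_e^{max}>\boldsymbol{0}$ and that contact is established from a separated (zero-force) state is invoked, guaranteeing continuity of $\boldsymbol{h}_e$ across the transition (no instantaneous jump in force for a physical spring-like model). The harder part is justifying that the error bound $\bar{z}(t)$ from (\ref{td_errbound}) is legitimately available in the controller: it depends on $\Vert\tilde{\boldsymbol{Z}}(0)\Vert$, which is computable since $\tilde{\boldsymbol{z}}_1(0) = \boldsymbol{\delta}_e(0) - \boldsymbol{z}_1(0)$ is measurable and $\tilde{\boldsymbol{z}}_2(0)$ can be bounded via $\beta$, and on the constant $\alpha$ with $\tilde{\boldsymbol{Z}}^T\boldsymbol{A}\tilde{\boldsymbol{Z}}\leq-\alpha\Vert\tilde{\boldsymbol{Z}}\Vert^2$, which exists whenever $\boldsymbol{A}$ is Hurwitz — so I would need to confirm that the chosen $\boldsymbol{L}_1,\boldsymbol{L}_2$ render $\boldsymbol{A}$ Hurwitz and that the resulting $\bar{z}(t)$ is a valid (monotone, finite) envelope, i.e., that $2\alpha\epsilon - \epsilon^2 > 0$ is ensured by picking $\epsilon < 2\alpha$. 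Once these bookkeeping facts are in place, the componentwise comparison inequality closes the argument cleanly.
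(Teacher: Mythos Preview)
Your proposal is correct and follows essentially the same route as the paper: differentiate $\boldsymbol{b}_{fc}$ during contact to obtain $\dot{\boldsymbol{b}}_{fc}=-\boldsymbol{K}_e^{pri}\boldsymbol{J}(\boldsymbol{q})\dot{\boldsymbol{q}}-\dot{\boldsymbol{\delta}}_e$, absorb the estimation error $\tilde{\boldsymbol{z}}_2$ using the bound $\boldsymbol{\sigma}(t)$ from (\ref{td_errbound}), invoke the $\mathcal{K}_k$ inequality to reach $\dot{\boldsymbol{b}}_{fc}\geq -l\,\boldsymbol{b}_{fc}$, and conclude forward invariance of $\boldsymbol{b}_{fc}\geq 0$. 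Your treatment is in fact more explicit than the paper's, which leaves the comparison-lemma step and the contact-initialization argument implicit; the additional bookkeeping you flag (Hurwitz $\boldsymbol{A}$, choice of $\epsilon<2\alpha$, $\boldsymbol{h}_e^{max}>\boldsymbol{0}$) is reasonable and not addressed in the paper's proof either.
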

\begin{proof}
	During the physical interaction, the time derivative of $\boldsymbol{b}_{fc}$ along the kinematics robot system is
	\begin{equation}\label{db_fc1}
		\begin{aligned}
			\dot{\boldsymbol{b}}_{fc}(t, \Delta \boldsymbol{x}) = -\boldsymbol{K}_e^{pri}\boldsymbol{J}(\boldsymbol{q})\dot{\boldsymbol{q}} - \dot{\boldsymbol{\delta}}_e(t).
		\end{aligned}
	\end{equation} 
	Substituting any Lipschitz continuous controller from the set $\mathcal{K}_k$ into the (\ref{db_fc1}), it is obtained that
	\begin{equation}\label{db_fc2}
		\begin{aligned}
			\dot{\boldsymbol{b}}_{fc}(t, \Delta \boldsymbol{x}) &\geq \boldsymbol{\sigma} + \boldsymbol{z}_2 - \dot{\boldsymbol{\delta}}_e - lb_{fc}(t,\Delta \boldsymbol{x})\\
			&\geq - l\boldsymbol{b}_{fc}(t,\Delta \boldsymbol{x}).
		\end{aligned}
	\end{equation}
	The second estimate is derived following the quantification of the estimation error obtained in (\ref{td_errbound}), that is, $\boldsymbol{\sigma}(t)\geq\boldsymbol{z}_2-\dot{\boldsymbol{\delta}}_e$. Since $\boldsymbol{\delta}_e$ is measurable with the sensor, then (\ref{db_fc2}) can be rewritten as follows
	\begin{equation}
		\dot{\boldsymbol{b}}_{fc}(t, \Delta \boldsymbol{x}) \geq - l(\boldsymbol{h}_e^{max} - \boldsymbol{h}_e).
	\end{equation} 
	
	Note that $\boldsymbol{b}_{fc}(t, \Delta \boldsymbol{x}) = \boldsymbol{h}_e^{max}-\boldsymbol{h}_e$. It is clear that any controller from the set $\mathcal{K}_k$ renders $\boldsymbol{h}_e^{max}-\boldsymbol{h}_e \geq - l(\boldsymbol{h}_e^{max} - \boldsymbol{h}_e)$ such that the force constraint is guaranteed during the physical interaction. 
\end{proof}
\subsection{Safety-critical compliant control design}
To integrate the interaction force constraint with the nominal robot position tracking task, the proposed control is derived by solving the following SC$^3$ quadratic programming (SC$^3$ QP). 
\begin{equation}
	\begin{aligned}
		\dot{\boldsymbol{q}}^*=&\argmin_{\dot{\boldsymbol{q}}\in \mathbb{R}^n} \; \frac{1}{2}\Vert\dot{\boldsymbol{q}} - \dot{\boldsymbol{q}}_{nom}\Vert^2, \quad\quad\quad\quad\quad\quad\quad\quad \textbf{SC$^3$ QP}\\
		&\; {\rm{s.t.}}\; -\boldsymbol{K}_e^{pri}\boldsymbol{J}(\boldsymbol{q})\dot{\boldsymbol{q}} - \boldsymbol{z}_2 - \boldsymbol{\sigma}(t) + l\boldsymbol{b}_{fc}(t,\Delta \boldsymbol{x}) \geq 0,
	\end{aligned}
	\label{QR_FCCBF}
\end{equation}
where the nominal position tracking controller $\dot{\boldsymbol{q}}_{nom}$ can be any well-designed position tracking controllers. The SC$^3$ QP modifies a baseline tracking controller $\dot{\boldsymbol{q}}_{nom}(t)$, subject to the FC-CBF constraint, in a minimally invasive manner. With the control approach proposed in (\ref{QR_FCCBF}), a strict interaction force constraint can be achieved and avoid the complexity of the force-constrained controller redesign. 

\begin{figure*}[!t]
	\centering
	\vspace{0pt}
	\includegraphics[width=0.9\textwidth]{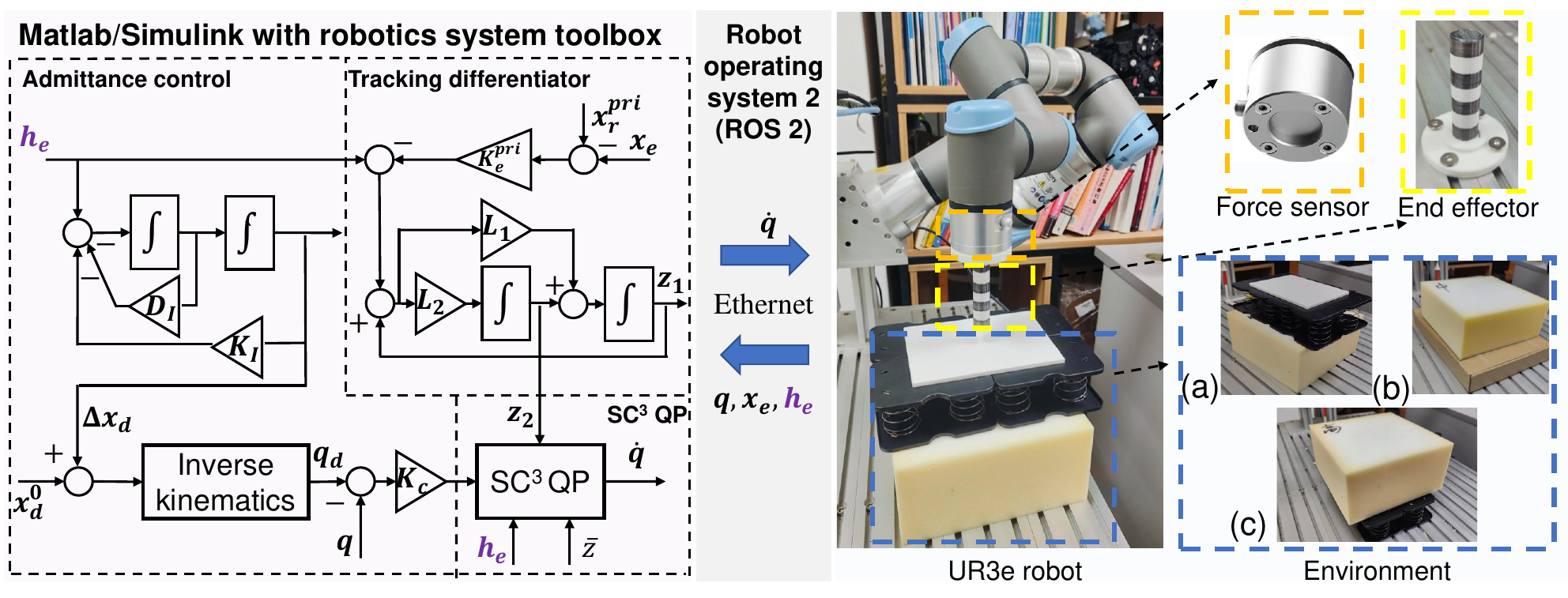}
	\caption{The setup of the experimental test on a UR3e robot: (a) the elastic environment (spring); (b) the viscoelastic environment (sponge); (c) the hybrid characteristic environment (a combination of sponge and spring in series).}
	\label{fig4}
\end{figure*} 
Under the SC$^3$ QP, the stability of the manipulator is discussed in the following theorem.
\begin{theorem}
	Consider the kinematic system (\ref{ef_vel}) and the tracking differentiator (\ref{td_sys}). Under SC$^3$ QP (\ref{QR_FCCBF}), all signals of the closed-loop system of the manipulator are bounded. Additionally, if the FC-CBF constraint is inactive, the nominal task can be achieved under $\dot{\boldsymbol{q}}_{nom}$. Upon activation of the FC-CBF constraint, the system transitions to force control, ensuring convergence of $\boldsymbol{h}_e$ to $\boldsymbol{h}_e^{max}$.
\end{theorem}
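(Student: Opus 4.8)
The plan is to split the statement into its three assertions and prove them in order: (a) uniform boundedness of every closed-loop signal, (b) exact recovery of the nominal tracking behaviour whenever the FC-CBF constraint is inactive, and (c) transition to force regulation with $\boldsymbol{h}_e\to\boldsymbol{h}_e^{max}$ whenever it is active. Throughout I would work under the standing hypotheses that keep the differentiator bound (\ref{td_errbound}) valid (so that $\tilde{\boldsymbol{Z}}$, hence $\boldsymbol{z}_1,\boldsymbol{z}_2$ and $\boldsymbol{\sigma}(t)$, are bounded as long as $\boldsymbol{\delta}_e,\dot{\boldsymbol{\delta}}_e$ are), that $\dot{\boldsymbol{q}}_{nom}$ is a well-designed, bounded tracking controller possessing the stated closed-loop tracking property, and that the robot stays away from kinematic singularities so that $\boldsymbol{K}_e^{pri}\boldsymbol{J}(\boldsymbol{q})$ keeps full row rank with a uniformly bounded right inverse and the SC$^3$ QP is always feasible.

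For (a), I would first note that Theorem 1 already forces $\boldsymbol{b}_{fc}(t,\Delta\boldsymbol{x})\ge\boldsymbol{0}$, i.e. $\boldsymbol{h}_e\le\boldsymbol{h}_e^{max}$ componentwise; combined with the unilateral nature of contact ($\boldsymbol{h}_e\ge\boldsymbol{0}$ while in contact) this bounds $\boldsymbol{h}_e$, hence $\boldsymbol{b}_{fc}$ and $\boldsymbol{\delta}_e=\boldsymbol{h}_e^{mea}-\boldsymbol{K}_e^{pri}(\boldsymbol{x}-\boldsymbol{x}_r^{pri})$. Then I would write the KKT solution of the SC$^3$ QP as $\dot{\boldsymbol{q}}^*=\dot{\boldsymbol{q}}_{nom}+(\boldsymbol{K}_e^{pri}\boldsymbol{J}(\boldsymbol{q}))^T\boldsymbol{\lambda}$ with a multiplier vector $\boldsymbol{\lambda}$ supported on the active constraint set; restricting to the active rows and using the bounded right inverse of $\boldsymbol{K}_e^{pri}\boldsymbol{J}(\boldsymbol{q})$ expresses $\boldsymbol{\lambda}$ in terms of the already-bounded quantities $\dot{\boldsymbol{q}}_{nom},\boldsymbol{z}_2,\boldsymbol{\sigma}(t),\boldsymbol{b}_{fc}$, so $\dot{\boldsymbol{q}}^*$ is bounded; then $\dot{\boldsymbol{x}}=\boldsymbol{J}(\boldsymbol{q})\dot{\boldsymbol{q}}^*$ is bounded, $\boldsymbol{x}$ remains in a compact set, and all remaining signals follow.

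For (b), the conclusion is immediate from the QP: if $\dot{\boldsymbol{q}}_{nom}$ strictly satisfies the FC-CBF inequality then it is simultaneously feasible and the unconstrained minimiser of $\frac{1}{2}\Vert\dot{\boldsymbol{q}}-\dot{\boldsymbol{q}}_{nom}\Vert^2$, so $\dot{\boldsymbol{q}}^*=\dot{\boldsymbol{q}}_{nom}$ and the nominal closed loop, together with its tracking property, is recovered verbatim. For (c), on intervals/components where the constraint is active it holds with equality, so $-\boldsymbol{K}_e^{pri}\boldsymbol{J}(\boldsymbol{q})\dot{\boldsymbol{q}}^*=\boldsymbol{z}_2+\boldsymbol{\sigma}(t)-l\boldsymbol{b}_{fc}$ on those components, and substituting into (\ref{db_fc1}) gives $\dot{\boldsymbol{b}}_{fc}=-\tilde{\boldsymbol{z}}_2+\boldsymbol{\sigma}(t)-l\boldsymbol{b}_{fc}$; since $|\tilde z_{2,i}|\le\Vert\tilde{\boldsymbol{Z}}\Vert\le\bar z(t)=\sigma_i(t)$ this yields the sandwich $-l\boldsymbol{b}_{fc}\le\dot{\boldsymbol{b}}_{fc}\le 2\boldsymbol{\sigma}(t)-l\boldsymbol{b}_{fc}$. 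Applying the comparison lemma to the left inequality re-confirms $\boldsymbol{b}_{fc}\ge\boldsymbol{0}$, while applied to the right one — using that $\bar z(t)$ decays monotonically to $\sqrt{b}$ with $b=\beta^2/(2\alpha\epsilon-\epsilon^2)$ from (\ref{td_errbound}) — it gives $\limsup_{t\to\infty}b_{fc,i}(t)\le 2\sqrt{b}/l$ for each active component; hence $\boldsymbol{h}_e=\boldsymbol{h}_e^{max}-\boldsymbol{b}_{fc}$ converges to an $O(1/l)$ neighbourhood of $\boldsymbol{h}_e^{max}$ (exactly to $\boldsymbol{h}_e^{max}$ when the differentiator is exact), which is precisely the force-regulation mode advertised by the theorem.

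The main obstacle I anticipate is in part (a): ruling out finite-escape or unboundedness of $\dot{\boldsymbol{q}}^*$ and $\boldsymbol{x}$ over the full time horizon, which is exactly why the non-singularity/feasibility hypothesis on $\boldsymbol{K}_e^{pri}\boldsymbol{J}(\boldsymbol{q})$ and a known contact lower bound on $\boldsymbol{h}_e$ are required, and in establishing well-posedness of the hybrid closed loop — the switchings between the \emph{in contact}/\emph{separated} branches of $\boldsymbol{b}_{fc}$ and between the active and inactive QP modes — so that $\dot{\boldsymbol{q}}^*(t)$ is piecewise continuous with finitely many switches on compact intervals (no Zeno behaviour). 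Once these technical points are settled, parts (b) and (c) follow essentially by the comparison lemma together with (\ref{td_errbound}) and (\ref{db_fc1}).
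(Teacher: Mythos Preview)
Your proposal is correct and follows essentially the same architecture as the paper's proof: both obtain the explicit QP solution via the KKT conditions, split into the inactive case (where $\dot{\boldsymbol{q}}^*=\dot{\boldsymbol{q}}_{nom}$ and the nominal Lyapunov property is inherited) and the active case (where the constraint holds with equality and one analyses the induced dynamics of $b_{fc,i}=h_{e,i}^{max}-h_{e,i}$). The only technical difference is that in the active case the paper uses the quadratic Lyapunov function $V_i=\tfrac{1}{2}\xi_i^2$ with $\xi_i=b_{fc,i}$ and derives $\dot V_i\le -2(l-\tfrac{1}{2})V_i+\tfrac{1}{2}\epsilon_1^2$, whereas you work directly with the scalar differential inequality $-l\,b_{fc,i}\le\dot b_{fc,i}\le 2\sigma_i(t)-l\,b_{fc,i}$ and invoke the comparison lemma; these are equivalent, and your version makes the $O(1/l)$ size of the residual set explicit. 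On boundedness you are in fact more careful than the paper, which simply asserts that boundedness of $\boldsymbol{x}$ and $\boldsymbol{q}$ follows from (\ref{ef_vel}) and (\ref{env}) once $\boldsymbol{h}_e$ is shown to converge, without discussing the right-inverse of $\boldsymbol{K}_e^{pri}\boldsymbol{J}(\boldsymbol{q})$, feasibility, or switching well-posedness that you flag as potential obstacles.
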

\begin{proof}
	The Lagrangian function of SC$^3$ QP is presented as follows.
	\begin{equation}
		\begin{aligned}
			L = \frac{1}{2}\Vert \dot{\boldsymbol{q}} - \dot{\boldsymbol{q}}_{nom} \Vert^2 + \sum_{i=1}^{6}\lambda_i(-\boldsymbol{g}_i\dot{\boldsymbol{q}}+z_{2,i}+\sigma_i-lb_{fc,i}),
		\end{aligned}
	\end{equation} 
	where $\lambda_i,\;i\in\mathbb{N}_{1:6}$ are Lagrangian multiples, $\boldsymbol{g}_i$ is the $i$th row vector of matrix $-\boldsymbol{K}_e^{pri}\boldsymbol{J}(\boldsymbol{q})$, $z_{2,i}$, $\sigma_i$ and $b_{fc,i}$ are the $i$th elements of the $\boldsymbol{z}_2$, $\boldsymbol{\sigma}$ and $\boldsymbol{b}_{fc}$. 
	
	Using the Karush-Kuhn-Tucker (KKT) condition in \cite{boyd2004convex}, we first solve the explicit form of (\ref{QR_FCCBF}), which is presented as follows
	\begin{equation}
		\dot{\boldsymbol{q}}^*= \left\{
		\begin{array}{c c}
			\dot{\boldsymbol{q}}_{nom}, & \boldsymbol{q}\in \Omega_0 \\
			\dot{\boldsymbol{q}}_{nom} + \Delta\dot{\boldsymbol{q}}, & \boldsymbol{q}\in \Omega_\mathcal{I} 		
		\end{array}\right.,
		\label{u_SC3}
	\end{equation}
	where $\Delta\dot{\boldsymbol{q}}$ is the modification term induced from the interaction force constraints. The $\Omega_0$ and $\Omega_\mathcal{I}$ are the sets in which the constraints are inactive or not, and their explicit form depends on the number of active constraints. It should be noted that $\Omega_0 \cup \Omega_\mathcal{I} = \mathbb{R}^6$. Next, the stability of the closed-loop system is analyzed depending on whether $\boldsymbol{q}$ is in $\Omega_0$ or ${\Omega}_\mathcal{I}$. 
	
	\textbf{Case 1}: The nominal controller $\dot{\boldsymbol{q}}_{nom}$ satisfies all the interaction force constraints. Since the $\dot{\boldsymbol{q}}_{nom}$ is assumed to be effective achieving the basic control tasks, there exists a positive definite function $V(\boldsymbol{q})$ such that $\dot{V}(\boldsymbol{q})$ is negative positive definite under the $\dot{\boldsymbol{q}}_{nom}$. Then, the nominal tracking task can be achieved, and all the signals of the closed-loop system are bounded.
	
	\textbf{Case 2}: At least one interaction constraint is active. We first consider a simple case, i.e., the $i$th constraint is active and the others are inactive. In this case, the Lagrangian multiple is  $\lambda_i=\frac{-\boldsymbol{g}_i\dot{\boldsymbol{q}}_{nom}+z_{2,i}+\sigma_i-lb_{fc,i}}{\Vert \boldsymbol{g}_i\boldsymbol{g}_i^T \Vert}$, and the set $\Omega_\mathcal{I} := \{\boldsymbol{q}\in\mathbb{R}^6| \boldsymbol{g}_i\dot{\boldsymbol{q}}_{nom}-z_{2,i}-\sigma_i+lb_{fc,i} < 0 \}$.
	
	Consider the Lyapunov function $V_i = \frac{1}{2}\xi_i^2,\; \xi_i = h_{e,i}^{max} - h_{e,i}$, where $h_{e,i}^{max}$, $h_{e,i}$ are the $i$th element of $\boldsymbol{h}_e^{max}$ and $\boldsymbol{h}_e$. Since the FC-CBF constraint (\ref{FCCBF1}) is satisfied, the time derivative of $V_i$ is
	\begin{equation}
		\begin{aligned}
			\dot{V}_i &= \xi_i(-lb_{fc,i} + z_{2,i} + \sigma_i) - \xi_i\dot{\delta}_{e,i}.
		\end{aligned}
	\end{equation}
	Note that $b_{fc,i} = h_{e,i}^{max} - h_{e,i}$ and there exists a positive constant $\epsilon_1>0$ such that $\vert z_{2,i} + \sigma_i - \dot{\delta}_{e,i}\vert < \epsilon_1$ from (\ref{td_errbound}). Then, it is obtained that
	\begin{equation}
		\dot{V}_i \leq -\alpha V_i + \beta,
	\end{equation}
	where $\alpha = 2(l-\frac{1}{2})$ and $\beta = \frac{1}{2}\epsilon_1^2$. It is clear that, if one of the interaction constraints is active, the manipulator will transition to force control mode, and the force tracking error exponentially converges to a compact set of zero. According to (\ref{ef_vel}) and (\ref{env}), it is obtained that the position of the end effector and the joint angles are bounded. 
	
	The above analysis is applicable to complex scenarios where multiple constraints are active simultaneously. 
\end{proof}

In the following, experimental tests of the UR3e robot are performed to illustrate the effectiveness of the proposed control strategy.
\section{Experimental test on UR3e robot}
The interaction tests are conducted using the industrial robot UR3e, which features mature low-level controllers to track desired joint velocities. Control algorithms are developed on the Matlab/Simulink platform with the Robotics System Toolbox, running on a Lenovo laptop equipped with an i7-13700H CPU and 32 GB RAM. Real-time data including joint positions, end effector pose, and measured force are collected by Robot Operating System 2 (ROS 2) and transmitted to Matlab / Simulink software at a frequency of 50 Hz. The experimental setup comprises the UR3e robot, the materials that comprise the tested environment, and the SC$^3$ approach based on admittance control, as illustrated in Fig. \ref{fig4}. To show the effectiveness of the proposed approach, the admittance control is considered for the comparison in which the impedance parameters are set as $\boldsymbol{K}_I = 600\boldsymbol{I}_6$ and $\boldsymbol{D}_I = 40\boldsymbol{I}_6$. The feedback gain used in the joint space is set as $\boldsymbol{K}_c=0.2\boldsymbol{I}_6$, and the parameters of the tracking differentiator are $\boldsymbol{L}_1 = 110\boldsymbol{I}_6$ and $\boldsymbol{L}_2 = 3000\boldsymbol{I}_6$.

\subsection{Repetitive interaction with different environments}
In this subsection, the interaction tasks under unknown environments with different physical characteristics such as elasticity (spring), viscoelasticity (sponge), and hybrid characteristics (a combination of spring and sponge in series) are considered. In the following test, the robot will follow a vertical trajectory, frequently pressing and then leaving the unknown environment. A square position reference is used, where the initial position along the $Z$ axis is denoted by $z_{\text{initial}}=0.045m$ and the desired position is denoted by $z_{\text{desired}}=-0.005m$ with respect to the robot base. 

\begin{figure}[!t]
	\centering
	\vspace{0pt}
	\includegraphics[width=0.45\textwidth]{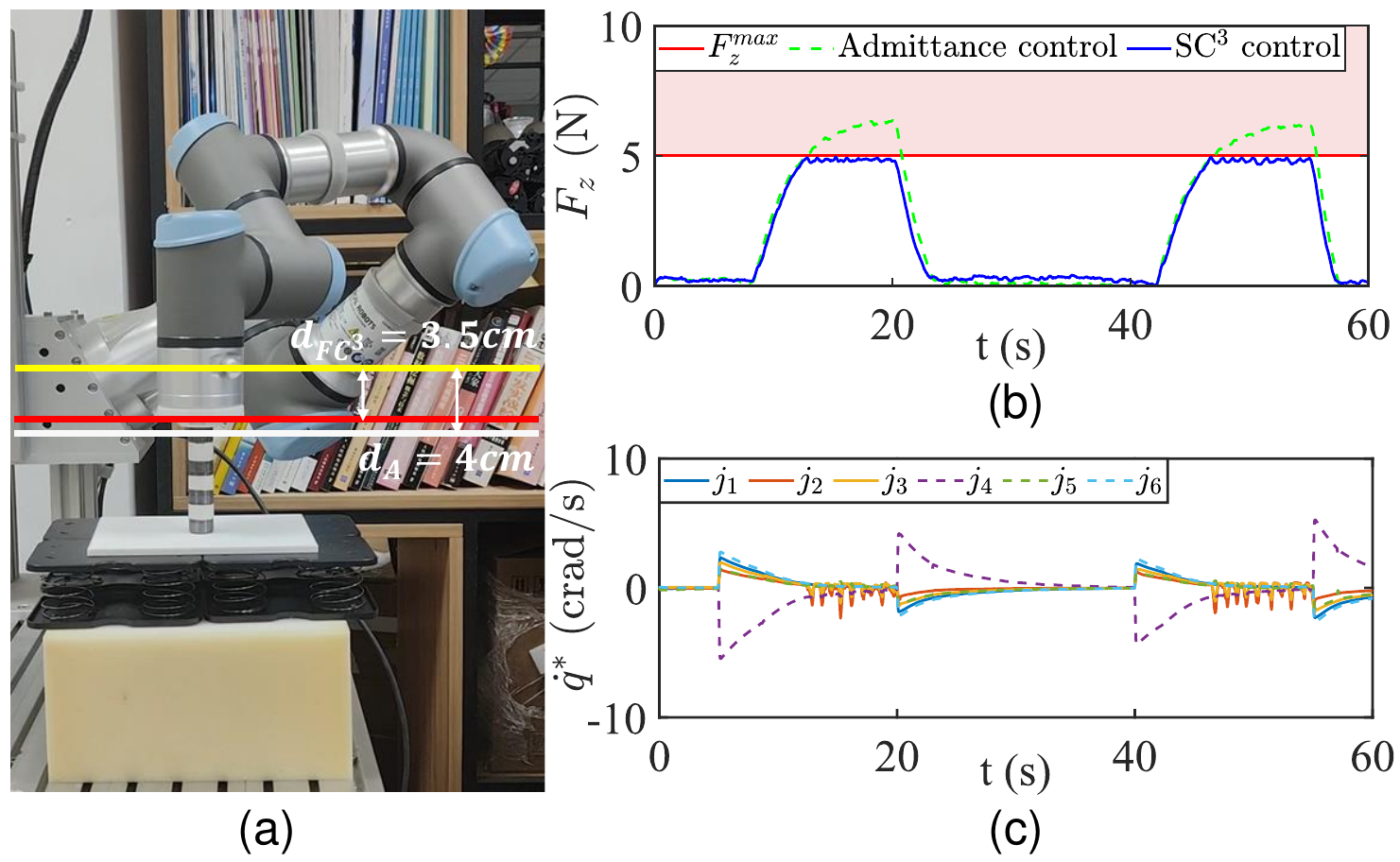}
	\caption{Experiment results for interaction task with the elastic environment: (a) The snap of UR3e pressing the spring under the SC$^3$ control. The yellow line indicates the initial position of the end-effector base, the red and white lines are the deepest position reached with SC$^3$ control and admittance control, respectively; (b) Curves of interaction forces; (c) Curves of desired joint velocities under SC$^3$.}
	\label{fig4}
\end{figure}

The actual rest position of the spring surface is indicated by $z_{\text{rest}}=0.011m$, while the prior rest position is denoted by $z_{\text{rest}}^{\text{pri}}=0m$. The prior stiffness is chosen as $\boldsymbol{K}_{e,z}^{\text{pri}} = 200  \text{N/m}$. The control parameter of FC-CBF is set as $l=10$.

\textit{Interaction test I (elastic environment):} This interaction task involves an unknown stiffness spring. The force constraint along $Z$-axis in this case is $\boldsymbol{F}_z^{\text{max}} = 5 \text{N}$. 

The interaction results, depicted in Fig. \ref{fig4}, include the following components: a snapshot of the UR3e pressing the spring with SC$^3$ control, curves illustrating interaction forces, and desired joint velocities. From Fig. \ref{fig4} (b), it is evident that the admittance control fails to adhere to the force constraint, whereas the proposed SC$^3$ approach effectively regulates the interaction force, even when the real rest position and stiffness of the environment are not precisely known. This observation is further illustrated in Fig. \ref{fig4} (a), where the deepest position achieved with SC$^3$ control is smaller than that obtained with admittance control. In Fig. \ref{fig4} (c), the desired joint velocities with SC$^3$ control during the interaction phases are presented.

\textit{Interaction test II (viscoelastic environment)}: In this interaction task, a high-density sponge is considered, commonly used in equipment protection. The force constraint in this case is $\boldsymbol{F}_z^{\text{max}} = 5 \text{N}$.
\begin{figure}[!t]
	\centering
	\vspace{0pt}
	\includegraphics[width=0.43\textwidth]{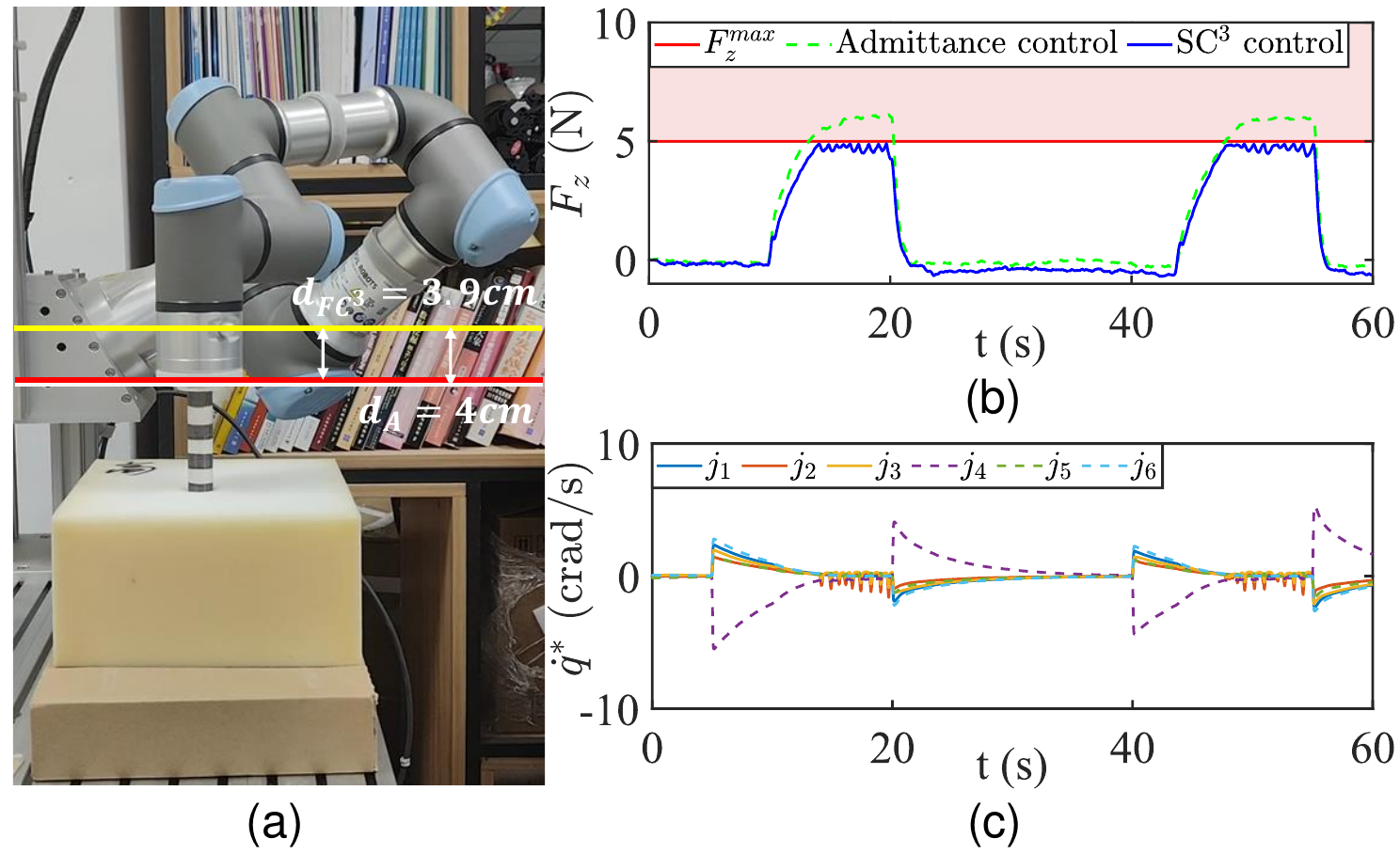}
	\caption{Experiment results for interaction task with the viscoelastic environment: (a) The snap of UR3e pressing the sponge under the SC$^3$ control. The yellow line indicates the initial position of the end-effector base, the red and white lines are the deepest position reached with SC$^3$ control and admittance control, respectively; (b) Curves of interaction forces; (c) Curves of desired joint velocities under SC$^3$.}
	\label{fig5}
\end{figure}

The interaction results shown in Fig. \ref{fig5} contain: a snapshot of UR3e pressing the high-density sponge with SC$^3$ control, curves illustrating interaction forces and desired joint velocities. From Fig. \ref{fig5} (b), it is evident that the proposed SC$^3$ approach can achieve strict interaction force constraints with the unknown rest position and stiffness of the sponge environment. The position difference between SC$^3$ and the admittance control, shown in Fig. \ref{fig5} (a), further verifies the effectiveness of the proposed approach.

\textit{Interaction test III (hybrid characteristic environment):} This interaction task involves a more complex environment by integrating a spring and a high-density sponge in series. This configuration allows the spring to absorb some force during the pressing phase and release it when the robot is moving away, thereby introducing additional complexity to the physical interaction. The force constraint considered is $\boldsymbol{F}_z^{\text{max}} = 3 \text{N}$.
\begin{figure}[!h]
	\centering
	\vspace{0pt}
	\includegraphics[width=0.45\textwidth]{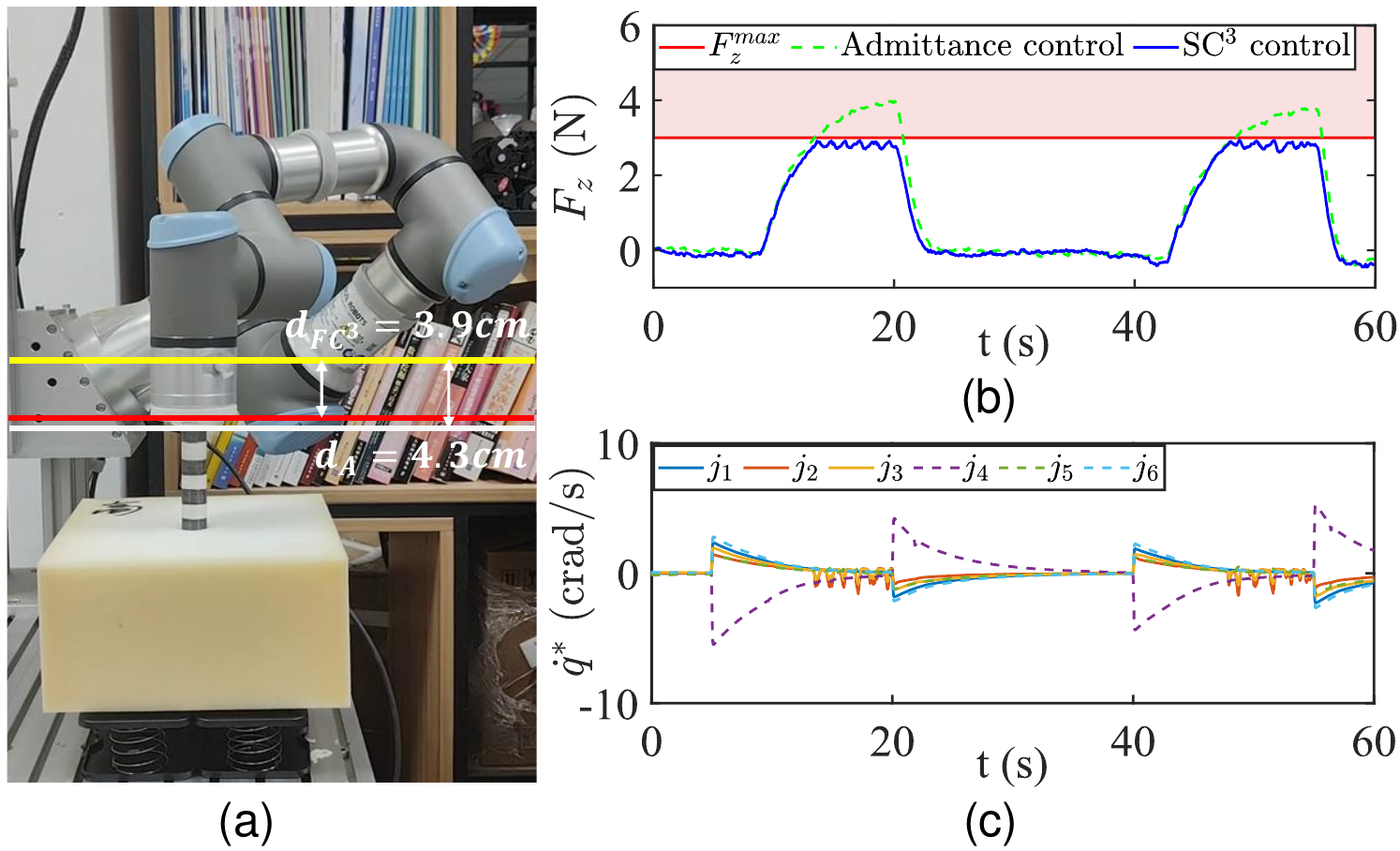}
	\caption{Experiment results for interaction task with the hybrid characteristic environment: (a) The snap of UR3e pressing the complex environment under SC$^3$ control. The yellow line indicates the initial position of the base of the end effector, the red and white lines are the deepest position reached with SC$^3$ control and admittance control, respectively; (b) Curves of interaction forces; (c) Curves of the desired joint velocities under SC$^3$.}\label{fig6}
\end{figure}
\begin{figure*}[!t]
	\centering
	\subfloat[ ]{
		\includegraphics[width=0.8\textwidth]{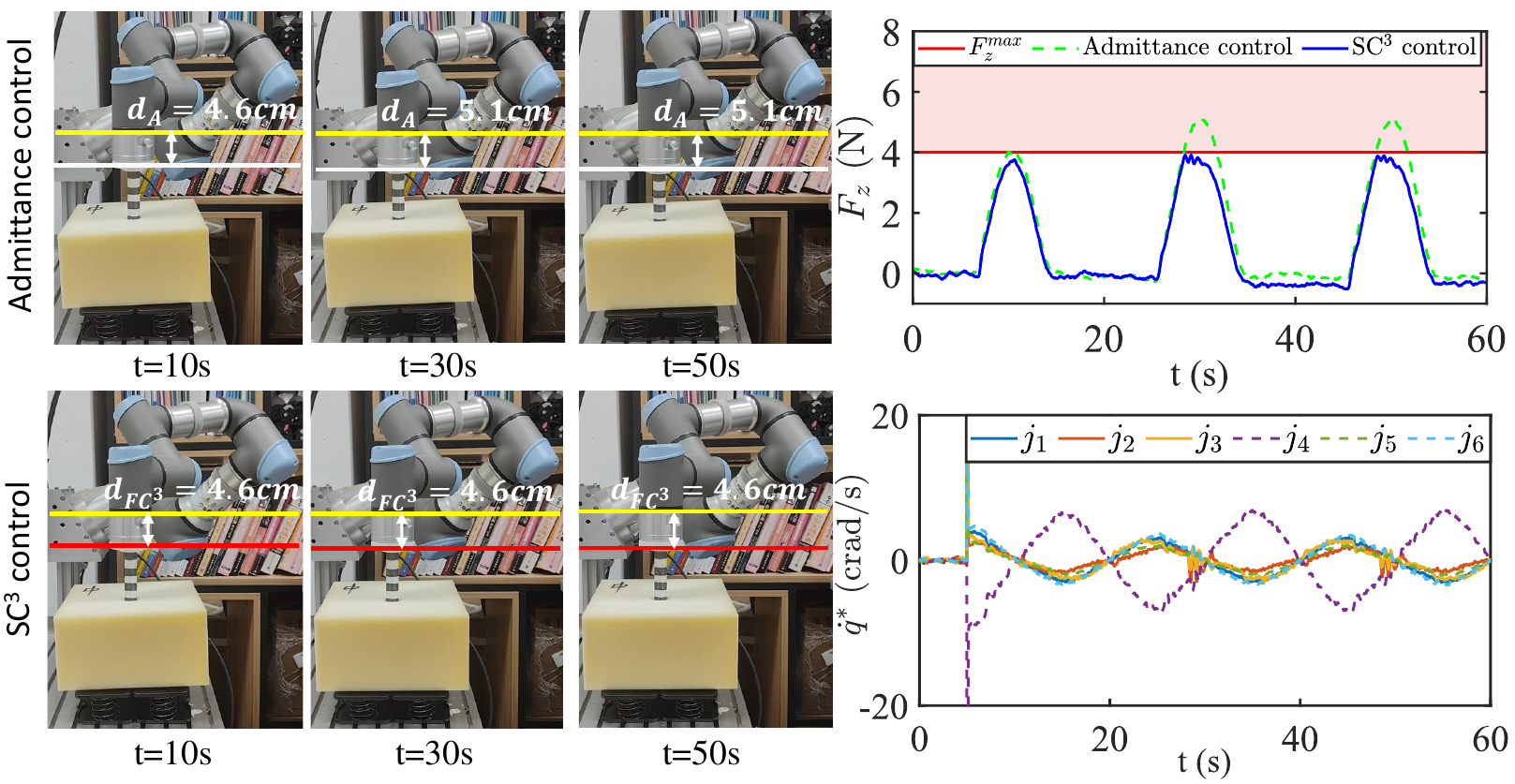}
		\label{fig7 1}}\\
	\subfloat[ ]{
		\includegraphics[width=0.8\textwidth]{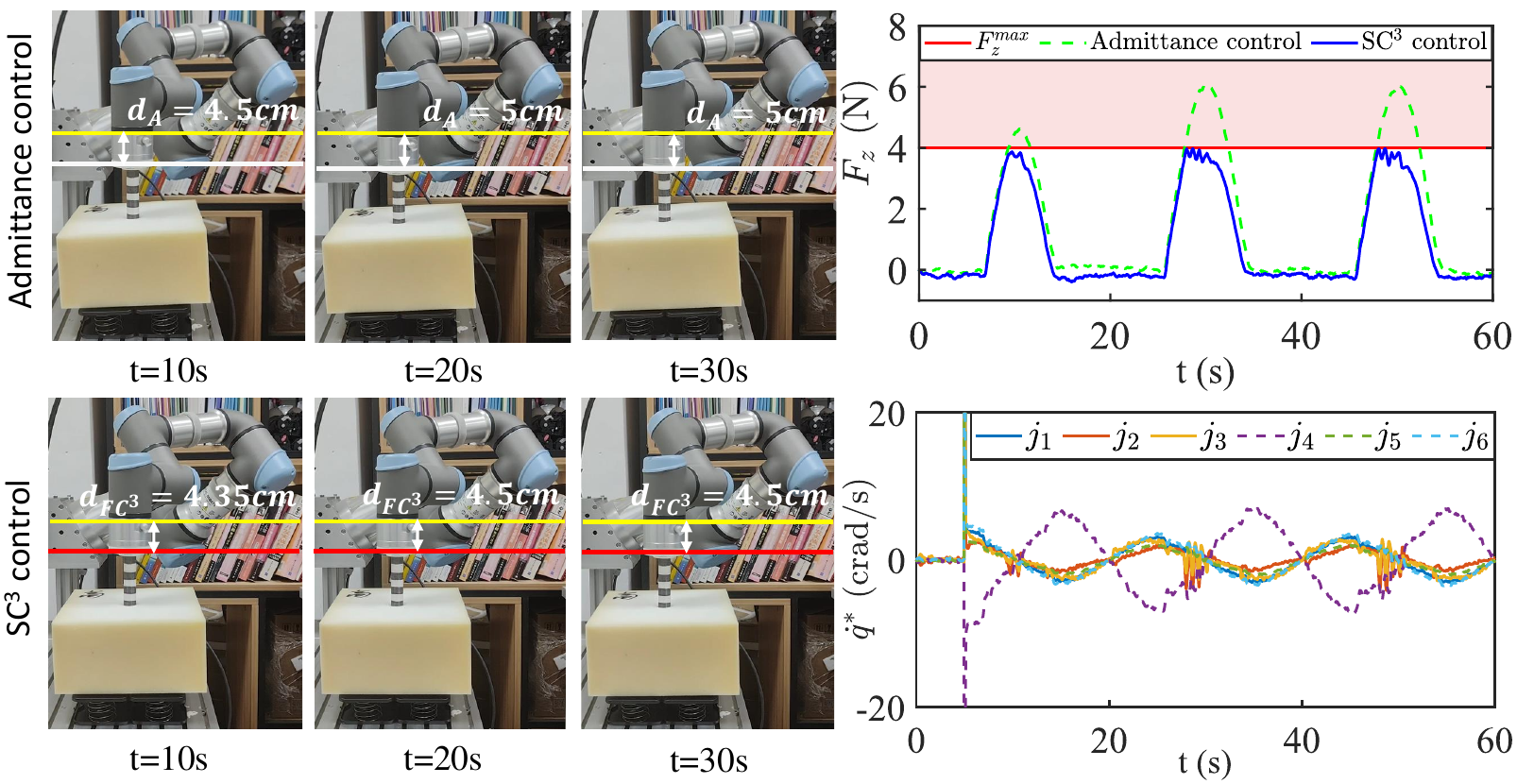}
		\label{fig7 2}}
	\caption{Experiment results for the time-varying interaction task with hybrid characteristic environments: (a) UR3e pressing the hybrid characteristics environment with a medium stiffness (medium density sponge and spring) with admittance control and SC$^3$ control; (b) UR3e pressing the hybrid characteristics environment with high stiffness environment (high-density sponge and spring) with admittance control and SC$^3$ control. The yellow line indicates the initial position of the end-effector base, while the red and blue lines represent the deepest position reached with the SC$^3$ control and the admittance control, respectively.}\label{fig7}
\end{figure*}

The interaction results presented in Fig. \ref{fig6} include the following components: a snapshot of the UR3e pressing the material with SC$^3$ control, curves illustrating interaction forces, and desired joint velocities. From Fig. \ref{fig6}, the proposed SC$^3$ approach achieves the strict interaction force constraint with a much more complex environment involving a sponge and a spring connected in series. 

\subsection{Time-varying position tracking with different stiffness environment}
In this subsection, the reference trajectory of UR3e is set as a sinusoidal wave $z_{\text{des}}(t) = -0.04\sin(0.1\pi(t)) + 0.025 \rm m$. Two hybrid characteristic environments with different stiffness, consisting of medium- and high-density sponges integrated with a spring in series, are considered. The actual rest position of the environment surface is denoted as $z_{\text{rest}}=0.011m$, while the prior rest position is $z_{\text{rest}}^{\text{pri}}=0m$. The prior stiffness is chosen as $\boldsymbol{K}_{e,z}^{\text{pri}} = 200 \text{N/m}$. The FC-CBF control parameter is set to $l=10$. The force constraint considered is $\boldsymbol{F}_z^{\text{max}} = 4 \text{N}$.

The experimental results are presented in Fig. \ref{fig7} including the snaps of the interaction process with both SC$^3$ control and admittance control, $Z$ axis interaction force curves, and the desired joint velocities under SC$^3$. In Fig. \ref{fig7} (a), the SC$^3$ control exhibits a similar performance to the admittance control since the FC-CBF constraint is not active in the first interaction. This emphasizes that the proposed FC-CBF serves as an add-on safety mechanism, modifying the nominal controller when the interaction force approaches the constraint. In the last two interactions, compared to conventional admittance control, the SC$^3$ approach achieves a strict force constraint with medium-density sponge and spring. In Fig. \ref{fig7} (b), it is demonstrated that the SC$^3$ approach achieves a strict force constraint for all interactions with the high-density sponge and spring environment.

\subsection{Force control with hybrid characteristic environment}
In this section, our objective is to assess the adaptability and versatility of our approach by examining its performance in a force control task within a hybrid characteristic environment. Specifically, we adopt a parallel force/position control approach \cite{robotics_bruno}, augmenting the baseline admittance control with a proportional integral force control loop. The desired force along the $z$ axis is set as $F_d = 4 \rm N$, and the force constraint is $F_z^{max} = 9 \rm N$. The control parameters of the additional force control loop are $k_p = 1\times 10^{-5}$ and $k_i = 1\times 10^{-2}$, and the control parameter for FC-CBF is $l=1.2$.  

\begin{figure}[!t]
	\centering
	\subfloat[ ]{
		\includegraphics[width=0.45\textwidth]{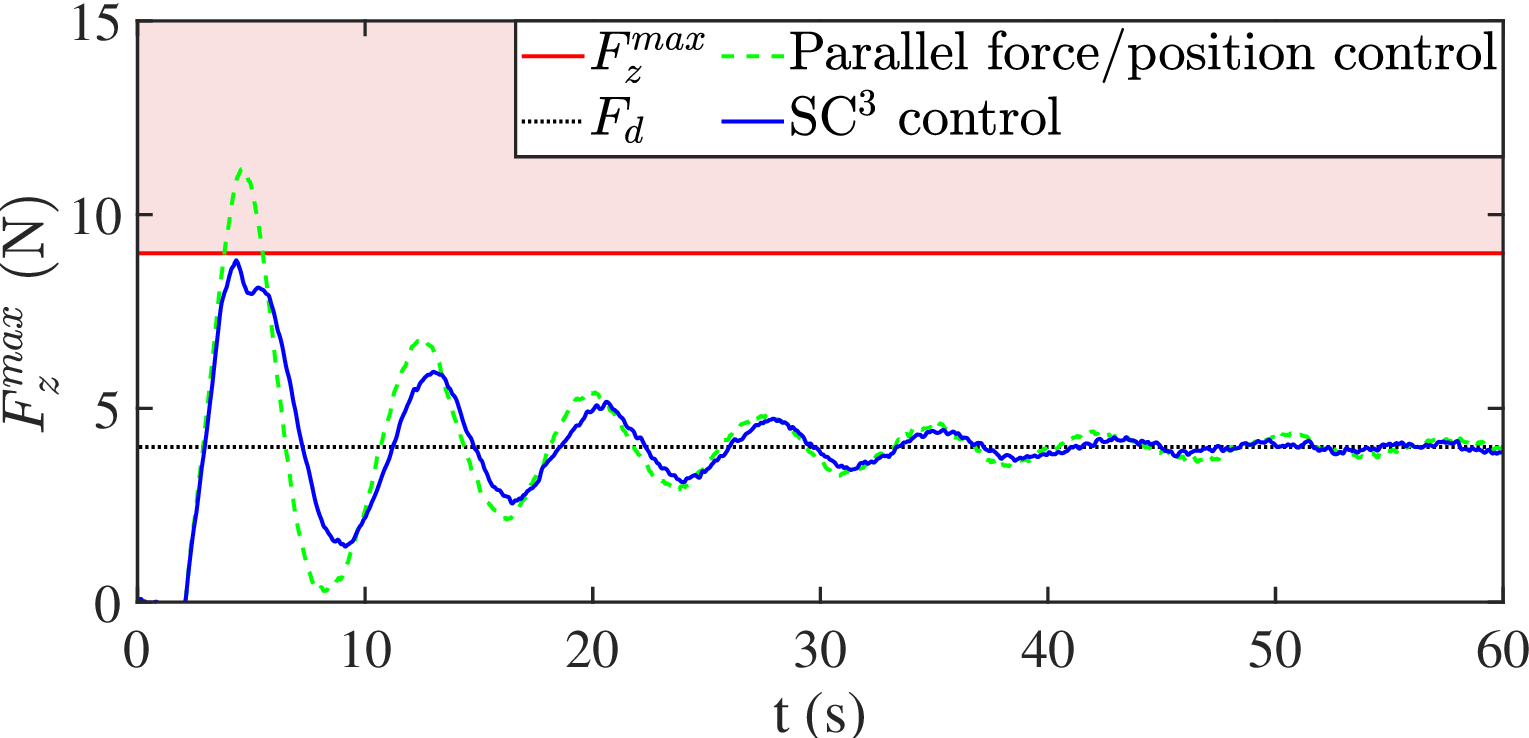}
		\label{fig8 1}}\\
	\subfloat[ ]{
		\includegraphics[width=0.45\textwidth]{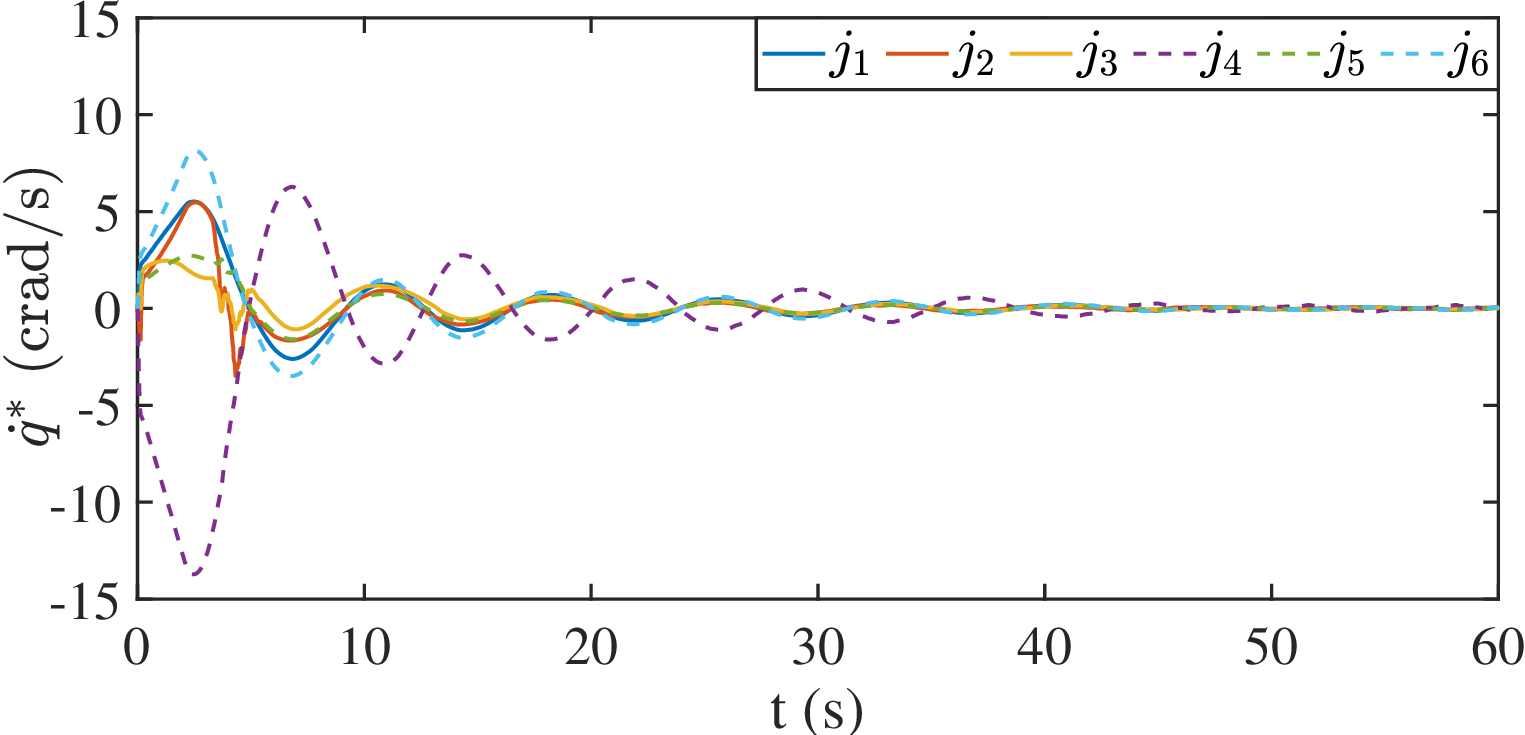}
		\label{fig8 2}}
	\caption{Experiment results for the force control task with hybrid characteristic environments: (a) Interaction forces; (b) Desired joint angular velocities.}
	\label{fig8}
\end{figure}

The experimental results are presented in Fig. \ref{fig8}, including the trajectories of the interaction force and the desired joint angular velocities under SC$^3$. From Fig. \ref{fig8 1}, our approach successfully regulates the control performance of the interaction force admitting the strict safety constraints and keeping the nominal force control performance of the parallel force/position control.

\subsection{Performance analysis with different parameter settings}
In this subsection, the interaction test III in Subsection A is conducted with different parameter settings, that is, the FC-CBF parameter $l$ and the estimation error bound $\bar{z}$. The force interaction performance is shown in Fig. \ref{fig9}.
\begin{figure}[!h]
	\centering
	\subfloat[ ]{
		\includegraphics[width=0.45\textwidth]{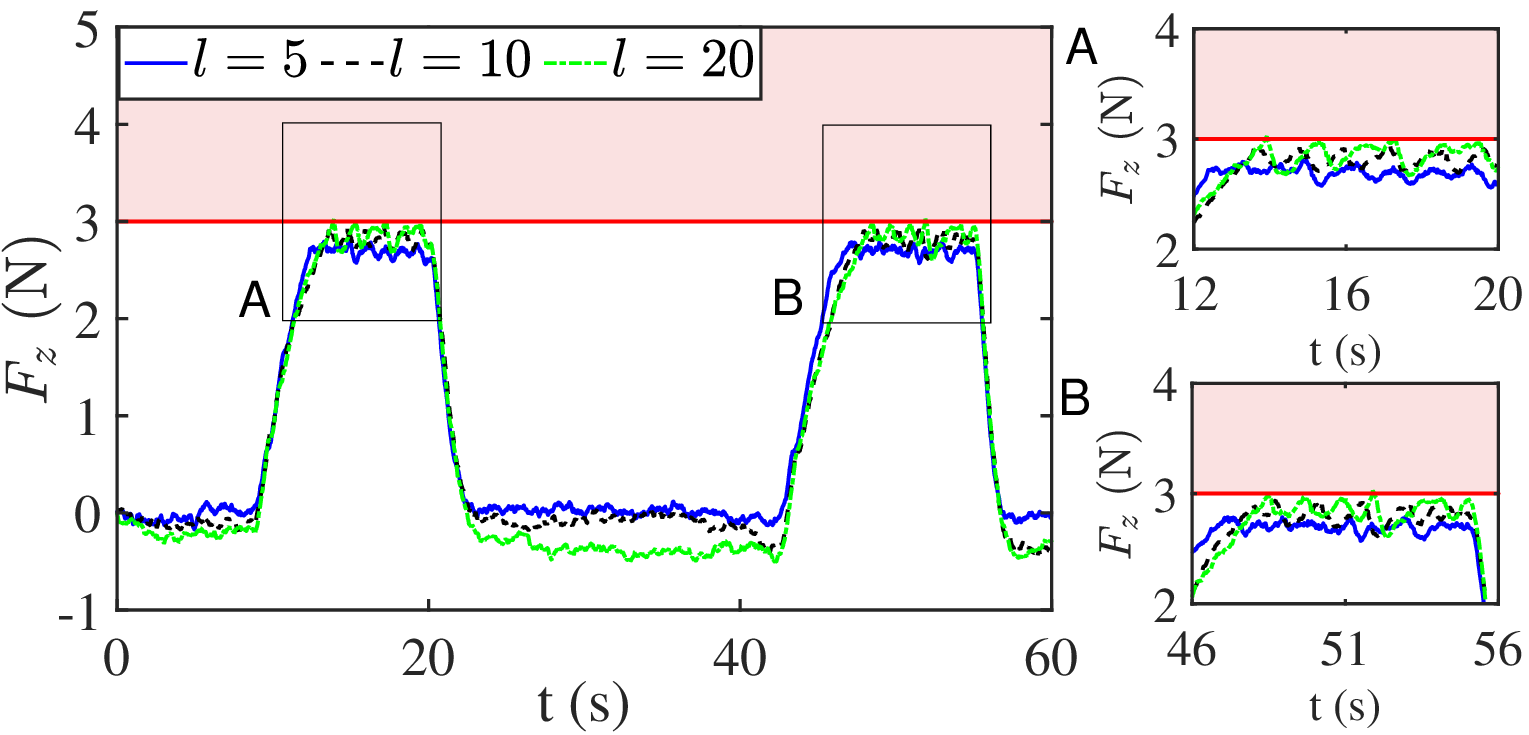}
		\label{fig9 1}}\\
	\subfloat[ ]{
		\includegraphics[width=0.45\textwidth]{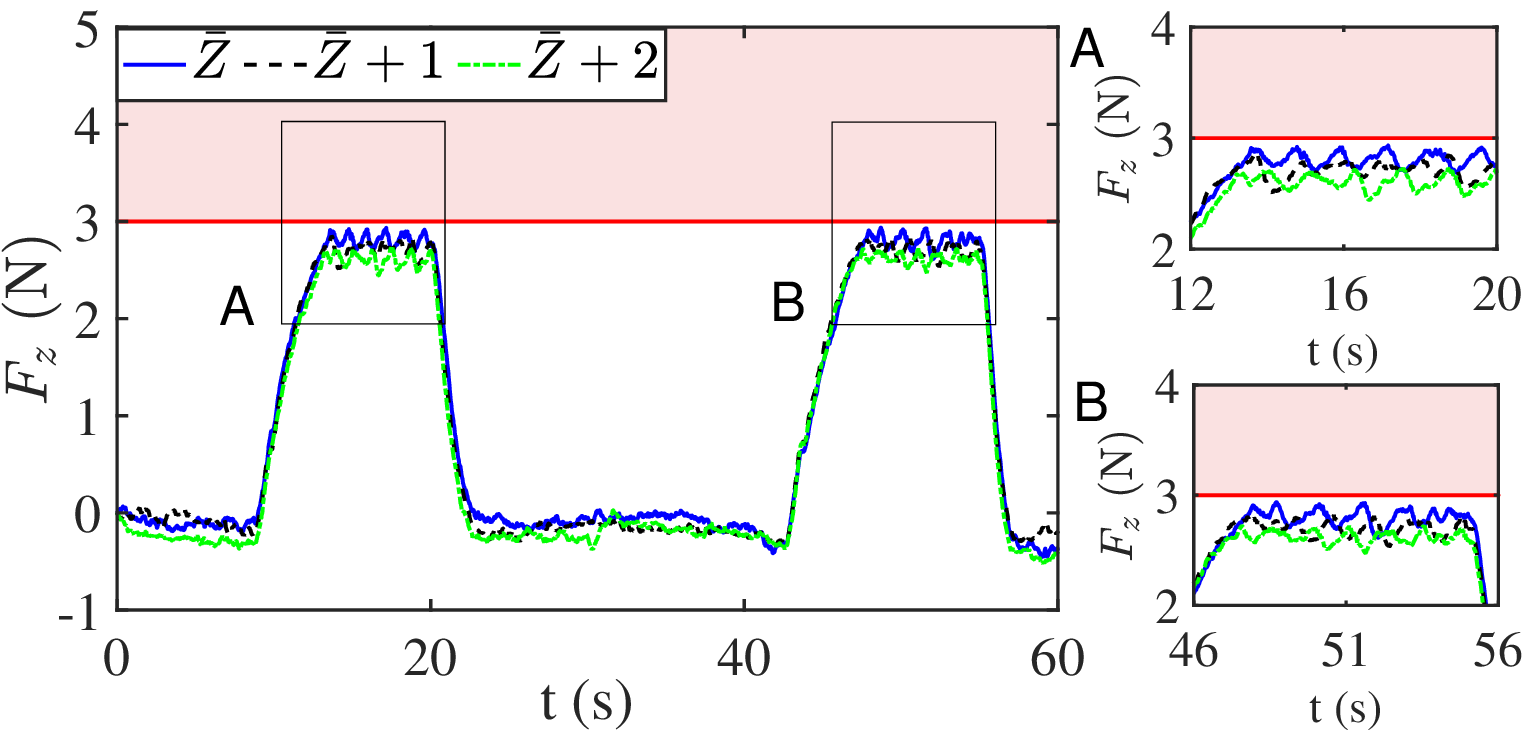}
		\label{fig9 2}}
	\caption{Parameter tuning test results with hybrid characteristic environments: (a) Different parameter $l$; (b) Different estimation error bound $\bar{z}$.}
	\label{fig9}
\end{figure}

In Fig. \ref{fig9 1}, the interaction performances with different settings of $l$ are presented. As $l$ increases, the interaction force becomes closer to the force constraint during the steady interaction phase. This indicates that the parameter $l$ adjusts the conservatism of FC-CBF: a higher value of $l$ reduces the conservatism. In Fig. \ref{fig9 2}, the interaction performances with different $\bar{z}$ are presented. The conservatism of FC-CBF is obtained with a larger $\bar{z}$, as the estimation error of $\dot{\boldsymbol{\delta}}_e$ is over-approximated to guarantee the strict interaction force constraint.  

\section{Exploring of SC$^3$ approach for manipulator dynamics}
Based on the high-order CBF design philosophy proposed in \cite{Xiao2021}, the proposed force-constrained CBF can be extended to the dynamics of the manipulator if the driving torque can be directly designed. Before presenting the FC-CBF design for the dynamics of the manipulator, the dynamics of the robot in Cartesian space is derived as follows.
\begin{equation}\label{robot_dyxsys}
	\boldsymbol{M}_x(\boldsymbol{q})\ddot{\boldsymbol{x}} + \boldsymbol{C}_x(\boldsymbol{q},\dot{\boldsymbol{q}})\dot{\boldsymbol{x}} + \boldsymbol{G}_x(\boldsymbol{q}) = \boldsymbol{\tau} - \boldsymbol{J}^T(\boldsymbol{q})\boldsymbol{h}_e,
\end{equation}
where $\boldsymbol{M}_x(\boldsymbol{q})=\boldsymbol{M}(\boldsymbol{q})\boldsymbol{J}^{-1}(\boldsymbol{q})$, $\boldsymbol{C}_x(\boldsymbol{q},\dot{\boldsymbol{q}})=[-\boldsymbol{M}(\boldsymbol{q})\boldsymbol{J}^{-1}(\boldsymbol{q})\dot{\boldsymbol{J}}(\boldsymbol{q}) +  \boldsymbol{C}(\boldsymbol{q},\dot{\boldsymbol{q}})]\boldsymbol{J}^{-1}(\boldsymbol{q})$ and $\boldsymbol{G}_x(\boldsymbol{q})=\boldsymbol{G}(\boldsymbol{q})$.

The control objective is to design the driving torque $\boldsymbol{\tau}$ to regulate the motion of the robot to ensure the strict constraint of the interaction force in the unknown environment described by the interaction model (\ref{env}).
\subsection{Tracking differentiator design for manipulator dynamics}
In this case, the tracking differentiator should be designed to be able to estimate the second derivative of the $\boldsymbol{\delta}_e(t,\Delta \boldsymbol{x}, \Delta \dot{\boldsymbol{x}})$, which is designed as follows
\begin{equation}\label{td_sys2}
	\begin{aligned}
		\dot{\boldsymbol{z}}_1 &= \boldsymbol{z}_2 + \boldsymbol{L}_1(\boldsymbol{\delta}_e - \boldsymbol{z}_1),\\
		\dot{\boldsymbol{z}}_2 &= \boldsymbol{L}_2(\boldsymbol{\delta}_e - \boldsymbol{z}_1),\\
		\dot{\boldsymbol{z}}_3 &= \boldsymbol{L}_3(\boldsymbol{\delta}_e - \boldsymbol{z}_1),
	\end{aligned}
\end{equation}
where $\boldsymbol{z}_3$ is the estimate of $\ddot{\boldsymbol{\delta}}_e$, $\boldsymbol{L}_i,\;i=\{1,2,3\}$ are the positive definite diagonal matrices to be designed. Similarly to the analysis of the kinematic case, the tracking differentiator of (\ref{td_sys2}) is stable if $\ddot{\boldsymbol{\delta}}_e$ is bounded. Here, we still denote $\bar{z}(t)$ as the upper bound of the estimation error. 
\subsection{SC$^3$ design for manipulator dynamics}
Considering the designed safety function (\ref{FC_CBF}), the interaction force constraint for dynamic system (\ref{robot_dyxsys}) is designed as follows.
\begin{definition}
	(FC-CBF for manipulator dynamics) Consider the dynamic system (\ref{robot_dyxsys}) and the tracking differentiator (\ref{td_sys2}). The function $\boldsymbol{b}_{fc}$ is the FC-CBF for the system (\ref{robot_dyxsys}) during interaction, if there exist positive parameters $l_1$ and $l_2$ such that the following conditions hold,
	\begin{equation}
		\begin{aligned}
			\sup_{\boldsymbol{\tau}\in\mathbb{R}^n}
			\{&-\boldsymbol{K}_e^{pri}\boldsymbol{M}_x^{-1}(\boldsymbol{q})\boldsymbol{\tau} + \boldsymbol{F}(\boldsymbol{q}) - \boldsymbol{z}_3-(l_1+l_2+1)\boldsymbol{\sigma}\\
			& + (l_1 +l_2)(-\boldsymbol{K}_e^{pri}\dot{\boldsymbol{x}} - \boldsymbol{z}_2)+l_1l_2\boldsymbol{b}_{fc} \}\geq 0,	
		\end{aligned}
		\label{FCCBF2}
	\end{equation}
	for all $\boldsymbol{x}\in\boldsymbol{C}_0\cap\boldsymbol{C}_1$ and $\boldsymbol{C}_1:=\{ [\boldsymbol{x}^T,\dot{\boldsymbol{x}}^T]^T\in\mathbb{R}^{12}\vert -\boldsymbol{K}_e^{pri}\dot{\boldsymbol{x}} - \dot{\boldsymbol{\delta}}_e + l_1\boldsymbol{b}_{fc} \}$, where $\boldsymbol{F}(\boldsymbol{q}) = \boldsymbol{K}_e^{pri}\boldsymbol{M}_x^{-1}(\boldsymbol{q})[\boldsymbol{J}^T(\boldsymbol{q})\boldsymbol{h}_e+\boldsymbol{C}_x(\boldsymbol{q},\dot{\boldsymbol{q}})\dot{\boldsymbol{x}}+\boldsymbol{G}_x(\boldsymbol{q})]$.
\end{definition}
Similarly, the following theorem provides sufficient conditions that there exists a Lipschitz continuous controller that renders the $\boldsymbol{C}_0\cap\boldsymbol{C}_1$ forward invariant. 
\begin{theorem}
	Consider the system (\ref{robot_dyxsys}) and the tracking differentiator (\ref{td_sys2}). If the initial state conditions satisfy $\boldsymbol{h}_e^{max} - [\boldsymbol{K}_e^{pri}(\boldsymbol{x}(0)-\boldsymbol{x}_r) + \boldsymbol{\delta}_e(0)] \geq 0$ and $-\boldsymbol{K}_e^{pri}\dot{\boldsymbol{x}}(0) - \boldsymbol{z}_2(0) - \boldsymbol{\sigma}(0) + l_1\boldsymbol{b}_{fc}(0) \geq 0$, then any Lipschitz continuous controller from $\mathcal{K}_d = \{ {\boldsymbol{\tau}}\in\mathbb{R}^n\vert -\boldsymbol{K}_e^{pri}\boldsymbol{M}_x^{-1}(\boldsymbol{q})\boldsymbol{\tau} + \boldsymbol{F}(\boldsymbol{q}) - \boldsymbol{z}_3-(l_1+l_2+1)\boldsymbol{\sigma}(t)
	+ (l_1+l_2)(-\boldsymbol{K}_e^{pri}\dot{\boldsymbol{x}} - \boldsymbol{z}_2)+l_1l_2\boldsymbol{b}_{fc} \geq 0 \}$ renders the set $\boldsymbol{C}_0\cap\boldsymbol{C}_1$ forward invariant, i.e., the strict interaction force constraint is achieved.
\end{theorem}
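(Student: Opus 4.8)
The plan is to mirror the high-order CBF argument of \cite{Xiao2021}, treating $\boldsymbol{b}_{fc}$ as a relative-degree-two output along the Cartesian dynamics (\ref{robot_dyxsys}). First I would introduce the auxiliary function $\boldsymbol{\psi}_1(t,\boldsymbol{x},\dot{\boldsymbol{x}}) := \dot{\boldsymbol{b}}_{fc} + l_1\boldsymbol{b}_{fc}$ and observe that, exactly as in Theorem~1, the first-order time derivative of $\boldsymbol{b}_{fc}$ during contact is $\dot{\boldsymbol{b}}_{fc} = -\boldsymbol{K}_e^{pri}\dot{\boldsymbol{x}} - \dot{\boldsymbol{\delta}}_e$, so $\boldsymbol{\psi}_1 = -\boldsymbol{K}_e^{pri}\dot{\boldsymbol{x}} - \dot{\boldsymbol{\delta}}_e + l_1\boldsymbol{b}_{fc}$; the set $\boldsymbol{C}_1$ is precisely the $0$-superlevel set of a tracking-differentiator surrogate of $\boldsymbol{\psi}_1$, namely $-\boldsymbol{K}_e^{pri}\dot{\boldsymbol{x}} - \boldsymbol{z}_2 - \boldsymbol{\sigma}(t) + l_1\boldsymbol{b}_{fc} \geq 0$. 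The two initial-condition hypotheses then say exactly that $\boldsymbol{x}(0)\in\boldsymbol{C}_0$ and the surrogate of $\boldsymbol{\psi}_1$ is nonnegative at $t=0$, i.e.\ the state starts inside $\boldsymbol{C}_0\cap\boldsymbol{C}_1$.

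Next I would differentiate once more. Using $\ddot{\boldsymbol{x}} = \boldsymbol{M}_x^{-1}(\boldsymbol{q})[\boldsymbol{\tau} - \boldsymbol{J}^T(\boldsymbol{q})\boldsymbol{h}_e - \boldsymbol{C}_x(\boldsymbol{q},\dot{\boldsymbol{q}})\dot{\boldsymbol{x}} - \boldsymbol{G}_x(\boldsymbol{q})]$ from (\ref{robot_dyxsys}), one gets $\ddot{\boldsymbol{b}}_{fc} = -\boldsymbol{K}_e^{pri}\boldsymbol{M}_x^{-1}(\boldsymbol{q})\boldsymbol{\tau} + \boldsymbol{F}(\boldsymbol{q}) - \ddot{\boldsymbol{\delta}}_e$, with $\boldsymbol{F}(\boldsymbol{q})$ collecting the Coriolis/gravity/contact terms exactly as defined after (\ref{FCCBF2}). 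Now I would replace the unmeasured $\dot{\boldsymbol{\delta}}_e$ and $\ddot{\boldsymbol{\delta}}_e$ by their estimates $\boldsymbol{z}_2,\boldsymbol{z}_3$ from the third-order tracking differentiator (\ref{td_sys2}) and bound the resulting errors by $\bar{z}(t)$ (collected into $\boldsymbol{\sigma}(t)$), just as was done in Theorem~1. Substituting any $\boldsymbol{\tau}\in\mathcal{K}_d$ into $\ddot{\boldsymbol{b}}_{fc}$ and regrouping, the defining inequality of $\mathcal{K}_d$ is engineered so that $\ddot{\boldsymbol{b}}_{fc} + (l_1+l_2)\dot{\boldsymbol{b}}_{fc} + l_1 l_2\boldsymbol{b}_{fc} \geq 0$ holds componentwise — the extra $(l_1+l_2+1)\boldsymbol{\sigma}$ term absorbs both the direct estimation error in $\boldsymbol{z}_3$ and the error propagated through the $\dot{\boldsymbol{b}}_{fc}$ substitution. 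Equivalently, $\dot{\boldsymbol{\psi}}_1 + l_2\boldsymbol{\psi}_1 \geq 0$, so by the comparison lemma each component of $\boldsymbol{\psi}_1$ stays nonnegative for all $t$, which keeps the state in $\boldsymbol{C}_1$; then $\dot{\boldsymbol{b}}_{fc} \geq -l_1\boldsymbol{b}_{fc}$ keeps it in $\boldsymbol{C}_0$. Hence $\boldsymbol{C}_0\cap\boldsymbol{C}_1$ is forward invariant and, since $\boldsymbol{b}_{fc}\geq 0$ is $\boldsymbol{h}_e\leq\boldsymbol{h}_e^{max}$, the strict force constraint follows.

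The main obstacle I anticipate is bookkeeping the estimation-error terms cleanly: because $\boldsymbol{\psi}_1$ itself is defined through $\dot{\boldsymbol{\delta}}_e$ but enforced through its surrogate $-\boldsymbol{K}_e^{pri}\dot{\boldsymbol{x}}-\boldsymbol{z}_2-\boldsymbol{\sigma}$, the second differentiation feeds the error $\boldsymbol{z}_2-\dot{\boldsymbol{\delta}}_e$ into the dynamics a second time, so I must check that the single margin $(l_1+l_2+1)\boldsymbol{\sigma}$ dominates the sum of (i) $\boldsymbol{z}_3 - \ddot{\boldsymbol{\delta}}_e$, (ii) $(l_1+l_2)(\boldsymbol{z}_2 - \dot{\boldsymbol{\delta}}_e)$, and (iii) the $\boldsymbol{\sigma}$ used in the definition of $\boldsymbol{C}_1$ — i.e.\ that $\bar z(t)$ from (\ref{td_errbound}) simultaneously bounds $\|\boldsymbol{z}_2-\dot{\boldsymbol{\delta}}_e\|$ and $\|\boldsymbol{z}_3-\ddot{\boldsymbol{\delta}}_e\|$ componentwise. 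A second, minor subtlety is that the comparison-lemma step must be applied componentwise (the inequalities are vector/element-wise), and one should note that the argument is local to the contact phase, exactly as in Theorem~1, since $\boldsymbol{b}_{fc}$ is only piecewise continuous across contact/separation. Apart from these, the computation is routine.
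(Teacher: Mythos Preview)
Your proposal is correct and follows essentially the same high-order CBF cascade argument as the paper: the paper introduces $\boldsymbol{\zeta}_0=\boldsymbol{b}_{fc}$, $\boldsymbol{\zeta}_1=\dot{\boldsymbol{\zeta}}_0+l_1\boldsymbol{\zeta}_0$, $\boldsymbol{\zeta}_2=\dot{\boldsymbol{\zeta}}_1+l_2\boldsymbol{\zeta}_1$ (your $\boldsymbol{\psi}_1$ is exactly $\boldsymbol{\zeta}_1$), uses the error bound (\ref{td_errbound}) to pass from the surrogate initial conditions to $\boldsymbol{\zeta}_0(0),\boldsymbol{\zeta}_1(0)\geq 0$, shows that $\boldsymbol{\tau}\in\mathcal{K}_d$ forces $\boldsymbol{\zeta}_2(t)\geq 0$, and then propagates nonnegativity down the cascade $\dot{\boldsymbol{\zeta}}_1=-l_2\boldsymbol{\zeta}_1+\boldsymbol{\zeta}_2$, $\dot{\boldsymbol{\zeta}}_0=-l_1\boldsymbol{\zeta}_0+\boldsymbol{\zeta}_1$ --- precisely your comparison-lemma step. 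One minor slip: in the paper $\boldsymbol{C}_1$ is defined through the \emph{true} $\dot{\boldsymbol{\delta}}_e$, not the surrogate $-\boldsymbol{z}_2-\boldsymbol{\sigma}$; only the initial-condition hypothesis is stated in surrogate form, and your anticipated bookkeeping (items (i) and (ii) absorbed by $(l_1+l_2+1)\boldsymbol{\sigma}$) is exactly what is needed, while item (iii) is not actually a separate contribution.
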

\begin{proof}
	First, since the initial states $\boldsymbol{x}(0)$ and $\dot{\boldsymbol{x}}(0)$ satisfy $\boldsymbol{h}_e^{max} - [\boldsymbol{K}_e^{pri}(\boldsymbol{x}(0)-\boldsymbol{x}_r) + \boldsymbol{\delta}_e(0)] \geq 0$ and $-\boldsymbol{K}_e^{pri}\dot{\boldsymbol{x}}(0) - \boldsymbol{z}_2(0) - \boldsymbol{\sigma}(0) + l_1\boldsymbol{b}_{fc}(0) \geq 0$, then following the results of the estimation error quantification (\ref{td_errbound}), it is obtained that 
	\begin{equation}\label{th2_eq2}
		\begin{aligned}
			\boldsymbol{b}_{fc}(t)\vert_{t=0} \geq 0,\; \bigg[\frac{{\rm d}\boldsymbol{b}_{fc}(t)}{{\rm dt}}+l_1\boldsymbol{b}_{fc}(t)\bigg]\bigg\vert_{t=0} \geq 0.
		\end{aligned}
	\end{equation}
	Moreover, for any Lipschitz controller of the set $\mathcal{K}_d$, it is obtained that
	\begin{equation}\label{th2_eq3}
		\ddot{\boldsymbol{b}}_{fc}(t) + (l_1+l_2)\dot{\boldsymbol{b}}_{fc}(t) + l_1l_2\boldsymbol{b}_{fc}(t) \geq 0,\;\forall t\geq0.
	\end{equation}
	
	Define $\boldsymbol{\zeta}_0 = \boldsymbol{b}_{fc}$, $\boldsymbol{\zeta}_1 = \dot{\boldsymbol{\zeta}}_0 + l_1\boldsymbol{\zeta}_0$ and $\boldsymbol{\zeta}_2 = \dot{\boldsymbol{\zeta}}_1 + l_2\boldsymbol{\zeta}_1$. Then, it is obtained that
	\begin{equation}\label{th2_eq4}
		\begin{aligned}
			\dot{\boldsymbol{\zeta}}_0 &= -l_1\boldsymbol{\zeta}_0 + \boldsymbol{\zeta}_1,\\
			\dot{\boldsymbol{\zeta}}_1 &= -l_2\boldsymbol{\zeta}_1 + \boldsymbol{\zeta}_2.\\
		\end{aligned}
	\end{equation}
	Considering the (\ref{th2_eq2}), (\ref{th2_eq3}) and (\ref{th2_eq4}), it can be obtained that $\boldsymbol{\zeta}_2(t)\geq 0$ for all $t>0$, and the initial states of $\boldsymbol{\zeta}_0(0)$ and $\boldsymbol{\zeta}_1(0)$ are all positive. Then, the forward invariance of $\boldsymbol{C}_{1}$ can be guaranteed since $\boldsymbol{\zeta}_2(t),\;\forall t\geq0$ and $\boldsymbol{\zeta}_1(0)$ are positive. Similarly, the forward invariance of $\boldsymbol{C}_{0}$ can be guaranteed for that of $\boldsymbol{C}_{1}$ and $\boldsymbol{\zeta}_0(0)$ is positive. Therefore, the forward invariant property of the set $\boldsymbol{C}_0\cap\boldsymbol{C}_1$ can be achieved such that the strict force constraint is guaranteed. This completes the proof.
\end{proof}

With a nominal position tracking controller $\boldsymbol{\tau}_{nom}(t)$, the proposed force constrained compliant controller can also be designed by constructing a quadratic programming subject to the FC-CBF constraint (\ref{FCCBF2}), which is shown as follows
\begin{equation}
	\begin{aligned}
		\boldsymbol{\tau}^*=&\argmin_{\boldsymbol{\tau}\in \mathbb{R}^n} \; \frac{1}{2}\Vert\boldsymbol{\tau} - \boldsymbol{\tau}_{nom}\Vert^2, \quad\quad\quad\quad\quad\quad\quad\quad\quad\quad\quad \\
		&{\rm{s.t.}}\; -\boldsymbol{K}_e^{pri}\boldsymbol{M}_x^{-1}(\boldsymbol{q})\boldsymbol{\tau} + \boldsymbol{F}(\boldsymbol{q}) - \boldsymbol{z}_3-(l_1+l_2+1)\boldsymbol{\sigma}\\
		&\;\;\;\;\;\; + (l_1 +l_2)(-\boldsymbol{K}_e^{pri}\dot{\boldsymbol{x}} - \boldsymbol{z}_2)+l_1l_2\boldsymbol{b}_{fc}  \geq 0.
	\end{aligned}
	\label{QR_FCCBF_II}
\end{equation}
The stability of the closed-loop system is summarized in the following theorem.
\begin{theorem}
	Consider the dynamic system (\ref{robot_dyxsys}) and the tracking differentiator (\ref{td_sys2}). Under SC$^3$ QP (\ref{QR_FCCBF_II}), all signals of the closed-loop system of the manipulator are bounded. Additionally, if the FC-CBF constraint is inactive, the nominal task can be achieved under ${\boldsymbol{\tau}}_{nom}$. Upon activation of the FC-CBF constraint, the system transitions to force control, ensuring convergence of $\boldsymbol{h}_e$ to $\boldsymbol{h}_e^{max}$.
\end{theorem}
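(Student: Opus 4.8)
The statement is the exact dynamic-case analogue of Theorem~3 (the kinematic stability result), so the plan is to mirror that proof while accounting for the relative degree increase. First I would write the Lagrangian of the SC$^3$ QP~(\ref{QR_FCCBF_II}) with multipliers $\lambda_i,\;i\in\mathbb{N}_{1:6}$, attached to the six scalar FC-CBF constraints in~(\ref{FCCBF2}), then invoke the KKT conditions from~\cite{boyd2004convex} to obtain the explicit closed form of the minimizer, namely $\boldsymbol{\tau}^*=\boldsymbol{\tau}_{nom}$ on the set $\Omega_0$ where all constraints are slack, and $\boldsymbol{\tau}^*=\boldsymbol{\tau}_{nom}+\Delta\boldsymbol{\tau}$ on the set $\Omega_\mathcal{I}$ where at least one constraint is active, with $\Delta\boldsymbol{\tau}$ the usual projection term built from the active rows of $-\boldsymbol{K}_e^{pri}\boldsymbol{M}_x^{-1}(\boldsymbol{q})$ and $\Omega_0\cup\Omega_\mathcal{I}=\mathbb{R}^n$. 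The boundedness and task-achievement claims then split into the same two cases as before.

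\textbf{Case 1 (constraint inactive).} On $\Omega_0$ we have $\boldsymbol{\tau}^*=\boldsymbol{\tau}_{nom}$, and since $\boldsymbol{\tau}_{nom}$ is assumed to solve the nominal position-tracking problem there is a positive definite $V(\boldsymbol{q},\dot{\boldsymbol{q}})$ whose derivative is negative definite along the closed loop of~(\ref{robot_dyxsys}); hence the nominal task is achieved and all signals are bounded. The tracking-differentiator states are bounded independently by the error estimate~(\ref{td_errbound}) extended to~(\ref{td_sys2}), under the standing boundedness assumption on $\ddot{\boldsymbol{\delta}}_e$ (here $\dddot{\boldsymbol{\delta}}_e$ for the third-order differentiator).

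\textbf{Case 2 (constraint active).} Treat first a single active constraint, say the $i$th, with the others slack; the Lagrange multiplier is then the scalar ratio analogous to the one in Theorem~3's proof, and $\Omega_\mathcal{I}$ is the halfspace where the $i$th FC-CBF inequality would be violated by $\boldsymbol{\tau}_{nom}$. I would take the same Lyapunov candidate $V_i=\tfrac12\xi_i^2$ with $\xi_i=h_{e,i}^{max}-h_{e,i}$; because the QP enforces~(\ref{FCCBF2}) with equality on the active index, the chain $\boldsymbol{\zeta}_0=\boldsymbol{b}_{fc},\;\boldsymbol{\zeta}_1=\dot{\boldsymbol{\zeta}}_0+l_1\boldsymbol{\zeta}_0,\;\boldsymbol{\zeta}_2=\dot{\boldsymbol{\zeta}}_1+l_2\boldsymbol{\zeta}_1$ from the proof of Theorem~4 gives $\dot{\boldsymbol{\zeta}}_1\le -l_2\boldsymbol{\zeta}_1+\text{(bounded error)}$ and $\dot{\boldsymbol{\zeta}}_0\le -l_1\boldsymbol{\zeta}_0+\boldsymbol{\zeta}_1$, from which, using $|z_{3,i}+(\text{error terms})-\ddot{\delta}_{e,i}|<\epsilon_1$ from the extended~(\ref{td_errbound}), one gets $\dot{V}_i\le -\alpha V_i+\beta$ for suitable $\alpha>0,\beta>0$ depending on $l_1,l_2,\epsilon_1$. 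This shows the force error enters a compact neighbourhood of zero exponentially, i.e. the system transitions to force control with $\boldsymbol{h}_e\to\boldsymbol{h}_e^{max}$; then~(\ref{ef_vel}) and~(\ref{env}) give boundedness of $\boldsymbol{x}$, $\dot{\boldsymbol{x}}$, hence of $\boldsymbol{q},\dot{\boldsymbol{q}}$, and the driving torque $\boldsymbol{\tau}^*$ is bounded as a continuous function of bounded signals. Finally I would remark that the multi-active-constraint case follows by the same argument applied blockwise to the active index set, exactly as in Theorems~3 and~4.

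\textbf{Main obstacle.} The delicate point is Case~2: unlike the kinematic QP, the active-constraint torque must be shown to be well defined (the relevant active-row Gram matrix of $\boldsymbol{K}_e^{pri}\boldsymbol{M}_x^{-1}(\boldsymbol{q})$ invertible, which needs $\boldsymbol{J}(\boldsymbol{q})$ nonsingular and $\boldsymbol{K}_e^{pri}$ restricted to the constrained directions full rank), and the Lyapunov decrease must be propagated through the \emph{two-level} $\boldsymbol{\zeta}$-cascade rather than a single integrator, so the bound $\dot V_i\le -\alpha V_i+\beta$ is obtained only after first establishing exponential ISS-type bounds on $\boldsymbol{\zeta}_1$ and then feeding them into the $\boldsymbol{\zeta}_0$ equation; keeping the error terms $\boldsymbol{\sigma}(t)$, $\boldsymbol{z}_2$, $\boldsymbol{z}_3$ correctly bounded via~(\ref{td_errbound}) throughout this cascade is where the real care is needed.
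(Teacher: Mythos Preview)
Your overall architecture (write the Lagrangian of~(\ref{QR_FCCBF_II}), apply KKT to obtain $\boldsymbol{\tau}^*=\boldsymbol{\tau}_{nom}$ on an inactive set and $\boldsymbol{\tau}^*=\boldsymbol{\tau}_{nom}+\Delta\boldsymbol{\tau}$ on an active set, then split into Case~1/Case~2) matches the paper exactly, and your Case~1 is identical to the paper's. The divergence is in Case~2, and it is a genuine methodological difference rather than an error.

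You keep the \emph{scalar} candidate $V_i=\tfrac12\xi_i^2$ from the kinematic proof and compensate for the increased relative degree by a two-stage cascade/ISS argument on the chain $\boldsymbol{\zeta}_0,\boldsymbol{\zeta}_1$: first bound $\zeta_{1,i}$ from $\dot{\zeta}_{1,i}=-l_2\zeta_{1,i}+\zeta_{2,i}$ with $\zeta_{2,i}$ equal to the bounded estimation residual when the constraint is active, then feed that bound into $\dot{\zeta}_{0,i}=-l_1\zeta_{0,i}+\zeta_{1,i}$ to obtain $\dot V_i\le-\alpha V_i+\beta$. This is correct, and you are right that the delicate part is tracking the error bounds through both levels. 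The paper instead builds a single \emph{combined} quadratic Lyapunov function on the augmented state $\boldsymbol{\xi}=[\boldsymbol{\xi}_1^T,\boldsymbol{\xi}_2^T]^T$ with $\boldsymbol{\xi}_1=\boldsymbol{h}_e^{max}-\boldsymbol{h}_e$, $\boldsymbol{\xi}_2=-\dot{\boldsymbol{h}}_e$, namely $V(\boldsymbol{\xi})=\boldsymbol{\xi}^T\boldsymbol{P}\boldsymbol{\xi}$ with $\boldsymbol{P}$ solving $\boldsymbol{A}^T\boldsymbol{P}+\boldsymbol{P}\boldsymbol{A}=-\boldsymbol{I}_{12}$ for the companion matrix of the second-order error dynamics; the active-constraint equality then gives $\dot V\le -\Vert\boldsymbol{\xi}\Vert^2+2\epsilon_2\lambda_{\max}(\boldsymbol{P})\Vert\boldsymbol{\xi}\Vert$ in a single step, and the paper treats all six constraints active simultaneously rather than starting from one index. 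Your route is more elementary (no Lyapunov equation) and links transparently to the $\boldsymbol{\zeta}$-chain of Theorem~3; the paper's route is shorter and yields an explicit ultimate bound directly in terms of $\lambda_{\max}(\boldsymbol{P})$ without cascade bookkeeping.
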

For the sake of readability, the details of the stability analysis are given in the Appendix. 

\section{Conclusions}
In this paper, a new safety-critical compliant control has been investigated, aiming to ensure the strict interaction force constraint during the physical interaction of the robot with unknown environments. Using the prior stiffness and environmental position, the interaction force constraint has been interpreted as two FC-CBFs that are applicable to both robot kinematics and dynamics. To mitigate the influences that arise from the actual environment and the desired contact model, a tracking differentiator has been constructed and its estimation error has also been quantified. The proposed SC$^3$ approach serves as a force safety patch modifying the nominal controller subject to interaction force constraint. Experimental tests using a UR3e industrial robot with various materials have verified the effectiveness of the proposed approach.

\section*{Appendix}
\subsection{Proof of Theorem 4}
\begin{proof}
	The Lagrangian function of (\ref{QR_FCCBF_II}) is presented as follows
	\begin{equation}
		\begin{aligned}
			L =& \frac{1}{2}\Vert {\boldsymbol{\tau}} - {\boldsymbol{\tau}}_{nom} \Vert^2 + \sum_{i=1}^{6}\lambda_i(-\boldsymbol{m}_i{\boldsymbol{\tau}}-f_i+z_{3,i}+(l_1+l_2\\
			&+1)\sigma_i-(l_1+l_2)n_i-l_1l_2b_{fc,i}),
		\end{aligned}
	\end{equation} 
	where $\lambda_i,\;i\in\mathbb{N}_{1:6}$ are Lagrangian multiples, $\boldsymbol{m}_i$ is the $i$th row vector of matrix $-\boldsymbol{K}_e^{pri}\boldsymbol{M}_x^{-1}(\boldsymbol{q})$, $z_{3,i}$, $\sigma_i$, $n_i$ and $b_{fc,i}$ are the $i$th elements of the $\boldsymbol{z}_2$, $\boldsymbol{\sigma}$, $-\boldsymbol{K}_e^{pri}\dot{\boldsymbol{x}}-\boldsymbol{z}_2$ and $\boldsymbol{b}_{fc}$, respectively. 
	
	Similarly, the explicit form of (\ref{QR_FCCBF_II}) is solved by using the KKT condition, which is presented as follows
	\begin{equation}
		\boldsymbol{u}^*= \left\{
		\begin{array}{c c}
			{\boldsymbol{\tau}}_{nom}, & [\boldsymbol{q}^T, \dot{\boldsymbol{q}}^T]\in \tilde{\Omega}_0 \\
			\boldsymbol{\tau}_{nom} + \Delta{\boldsymbol{\tau}}, & [\boldsymbol{q}^T, \dot{\boldsymbol{q}}^T]\in \tilde{\Omega}_\mathcal{I} 		
		\end{array}\right.,
		\label{u_SC3_II}
	\end{equation}
	where $\Delta{\boldsymbol{\tau}}$ is the modification term induced by the safety constraints. The $\tilde{\Omega}_0$ and $\tilde{\Omega}_\mathcal{I}$ are the sets in which the constraints are inactive or not, and their explicit form depends on the number of active constraints. It should be noted that $\tilde{\Omega}_0 \cup \tilde{\Omega}_\mathcal{I} = \mathbb{R}^{12}$. Next, the stability of the closed-loop system is analyzed depending on whether $[\boldsymbol{q}^T, \dot{\boldsymbol{q}}^T]$ is in $\tilde{\Omega}_0$ or $\tilde{\Omega}_\mathcal{I}$. 
	
	\textbf{Case 1}: The nominal controller ${\boldsymbol{\tau}}_{nom}$ satisfies all the interaction force constraints, i.e., the state $[\boldsymbol{q}^T, \dot{\boldsymbol{q}}^T]\in\tilde{\Omega}_{0}$. Since the ${\boldsymbol{\tau}}_{nom}$ is assumed to be effective achieving the basic control tasks, there exists a positive definite function $V(\boldsymbol{q},\dot{\boldsymbol{q}})$ such that $\dot{V}(\boldsymbol{q},\dot{\boldsymbol{q}})$ is negative positive definite under the ${\boldsymbol{\tau}}_{nom}$. Then, the nominal tracking task can be achieved, and all the signals of the closed-loop system are bounded.
	
	\textbf{Case 2}: In this case, we consider that all the interaction force constraints are all active such that the $\boldsymbol{u}^*=\boldsymbol{\tau}_{nom}+\Delta\boldsymbol{\tau}$ satisfies the relative constraint.
	
	Consider the Lyapunov function $V(\boldsymbol{\xi}) = \boldsymbol{\xi}^T\boldsymbol{P}\boldsymbol{\xi}$, where $\boldsymbol{\xi} = [\boldsymbol{\xi}_1^T, \boldsymbol{\xi}_2^T]^T$ with $\boldsymbol{\xi}_1 = \boldsymbol{h}_e^{max}-\boldsymbol{h}_e$, $\boldsymbol{\xi}_2 = -\dot{\boldsymbol{h}}_e$. The positive definite matrix $\boldsymbol{P}$ satisfies $\boldsymbol{A}^T\boldsymbol{P} + \boldsymbol{P}\boldsymbol{A}=-\boldsymbol{I}_{12}$ where 
	\begin{equation}
		\boldsymbol{A} = \begin{bmatrix}
			0 & \boldsymbol{I}_6 \\
			-l_1l_2\boldsymbol{I}_6 & -(l_1+l_2)\boldsymbol{I}_6
		\end{bmatrix}.
	\end{equation}
		
	Putting the safety controller $\boldsymbol{u}^*=\boldsymbol{\tau}_{nom}+\Delta\boldsymbol{\tau}$ in the time derivative of $V(\boldsymbol{\xi})$, it is obtained that
	\begin{equation}
		\begin{aligned}
			\dot{V}(\boldsymbol{\xi}) &= \boldsymbol{\xi}^T (\boldsymbol{A}^T\boldsymbol{P}+\boldsymbol{P}\boldsymbol{A})\boldsymbol{\xi}+2\boldsymbol{\xi}^T\boldsymbol{P}\bar{\boldsymbol{\delta}},
		\end{aligned}
	\end{equation}
	where 
	\begin{equation}
		\begin{aligned}
			\bar{\boldsymbol{\delta}}&=\begin{bmatrix}
				0 \\
				(l_1+l_2)(\boldsymbol{z}_2-\dot{\boldsymbol{\delta}}_e)+\boldsymbol{z}_3-\ddot{\boldsymbol{\delta}}_e+(l_1+l_2+1)\boldsymbol{\sigma}
			\end{bmatrix}.
		\end{aligned}
	\end{equation}
	Based on the results of the tracking differentiator design, it is obtained that there exists a positive constant $\epsilon_2$ satisfying $\Vert \bar{\boldsymbol{\delta}}\Vert \leq \epsilon_2$. Then, one can obtain that
	\begin{equation}
		\dot{V}(\boldsymbol{\xi}) \leq -\Vert \boldsymbol{\xi}\Vert^2 + 2\epsilon_2\lambda_{max}(\boldsymbol{P})\Vert \boldsymbol{\xi}\Vert.
	\end{equation}  
	It is clear that, if the interaction constraints are active, the manipulator will still transition to the force control mode, and the force tracking error exponentially converges to a compact set of zero. According to (\ref{env}) and (\ref{robot_dyxsys}), it is obtained that the position of the end effector, the joint angles, as well as their velocities are bounded. 
	
\end{proof}
%
%
%
%
%

\bibliographystyle{IEEEtranTIE}
\bibliography{SC3}\ 
%

%
%
%
%
%
\vspace{-0.9cm}
\begin{IEEEbiography}[{\includegraphics[width=1in,height=1.25in,clip,keepaspectratio]{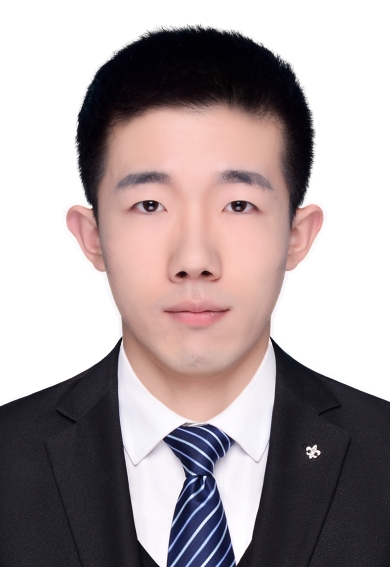}}]
	{Xinming Wang} (Graduate Student Member, IEEE) received the B.E. degree and M.E. degree in navigation, guidance and control from Northwestern Polytechnical University, Xi'an, China, in 2016 and 2019, respectively. He is now pursuing a Ph.D. degree from the School of Automation of Southeast University, Nanjing, China. He has been a visiting student with the Department of Aeronautical and Automotive Engineering, Loughborough University, UK, since 2022. His research interests include safety-critical control, disturbance rejection control, and its applications to mechatronics systems and flight control systems.
\end{IEEEbiography}
\vspace{-0.9cm}
\begin{IEEEbiography}[{\includegraphics[width=1in,height=1.25in,clip,keepaspectratio]{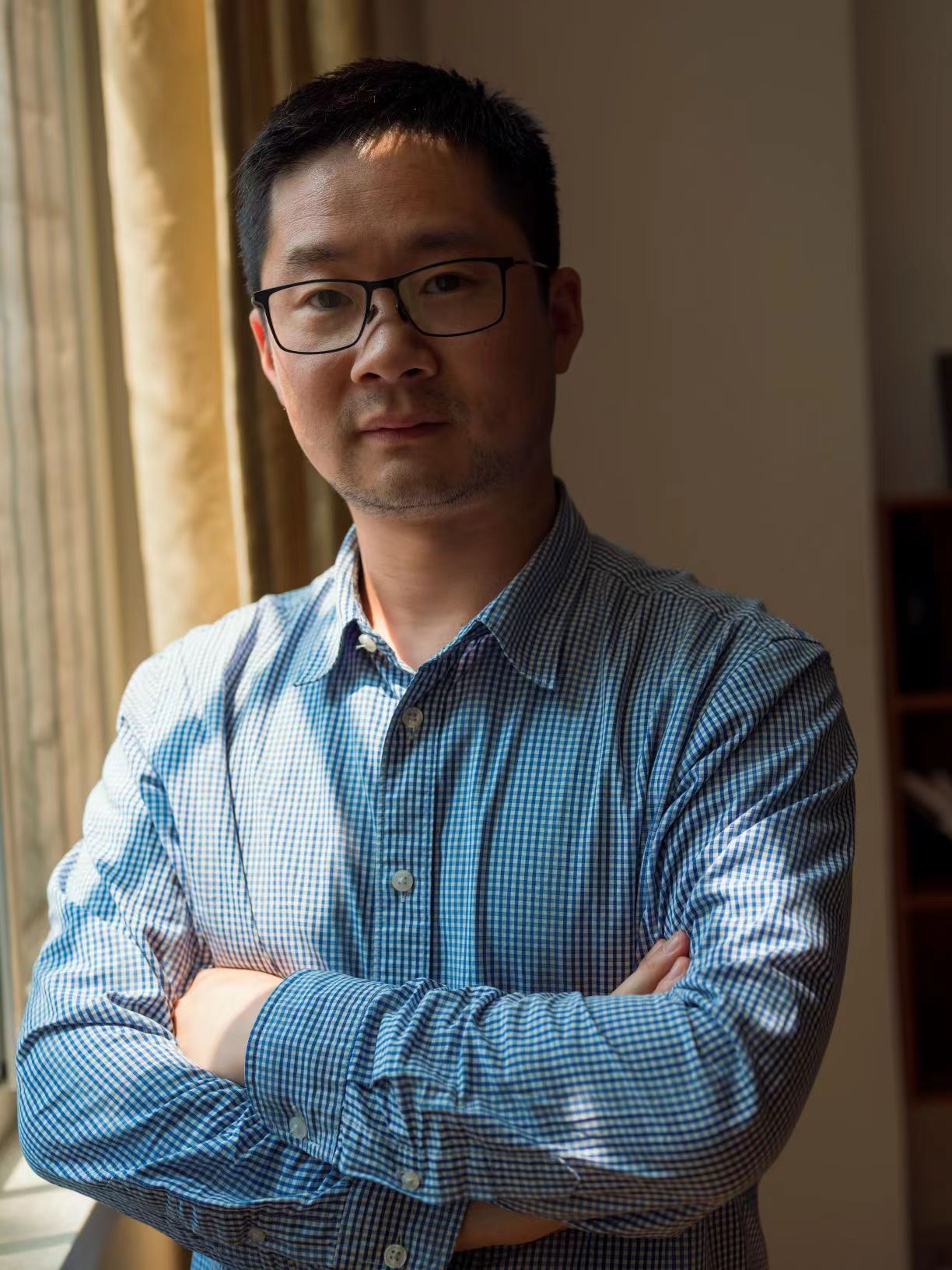}}]
	{Jun Yang} (Fellow, IEEE) received the B.Sc. degree in automation from the Department of Automatic Control, Northeastern University, Shenyang, China, and the Ph.D. degree in control theory and control engineering from the School of Automation, Southeast University, Nanjing, China, in 2006 and 2011, respectively. Since 2020, he has been with the Department of Aeronautical and Automotive Engineering, Loughborough University, Loughborough, U.K., as a Senior Lecturer and is promoted to a Reader in 2023. His research interests include disturbance observer, motion control, visual servoing, nonlinear control, and autonomous systems. Dr. Yang was the recipient of the EPSRC New Investigator Award.
	
	He serves as an Associate Editor or Technical Editor for IEEE Transactions on Industrial Electronics, IEEE-ASME Transactions on Mechatronics, IEEE Open Journal of Industrial Electronics Society, etc. He is a Fellow of IEEE and IET.
\end{IEEEbiography}
\vspace{-0.9cm}
\begin{IEEEbiography}[{\includegraphics[width=1in,height=1.25in,clip,keepaspectratio]{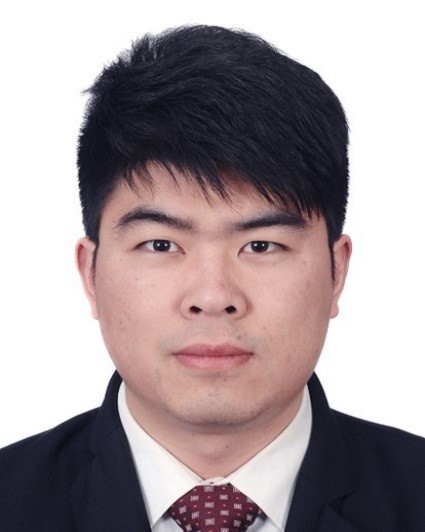}}]
	{Jianliang Mao} (Member, IEEE) received the B.Sc. degree in automation, the M.Sc. degree
	in control engineering, and the Ph.D. degree in control theory and control engineering from the School of Automation, Southeast University, Nanjing, China, in 2011, 2014, and 2018, respectively.
	
	From 2018 to 2021, he was with the Research and Development Institute, Estun Automation
	Co., Ltd. He was selected into “Jiangsu Province entrepreneurship and innovation plan” in 2019. He is currently with the College of Automation Engineering, Shanghai University of Electric Power, Shanghai, China. His research interests include model predictive control, sliding mode control, and vision-based interactive control and their applications to electric drives and robot manipulators.
\end{IEEEbiography}
\vspace{-0.9cm}
\begin{IEEEbiography}[{\includegraphics[width=1in,height=1.25in,clip,keepaspectratio]{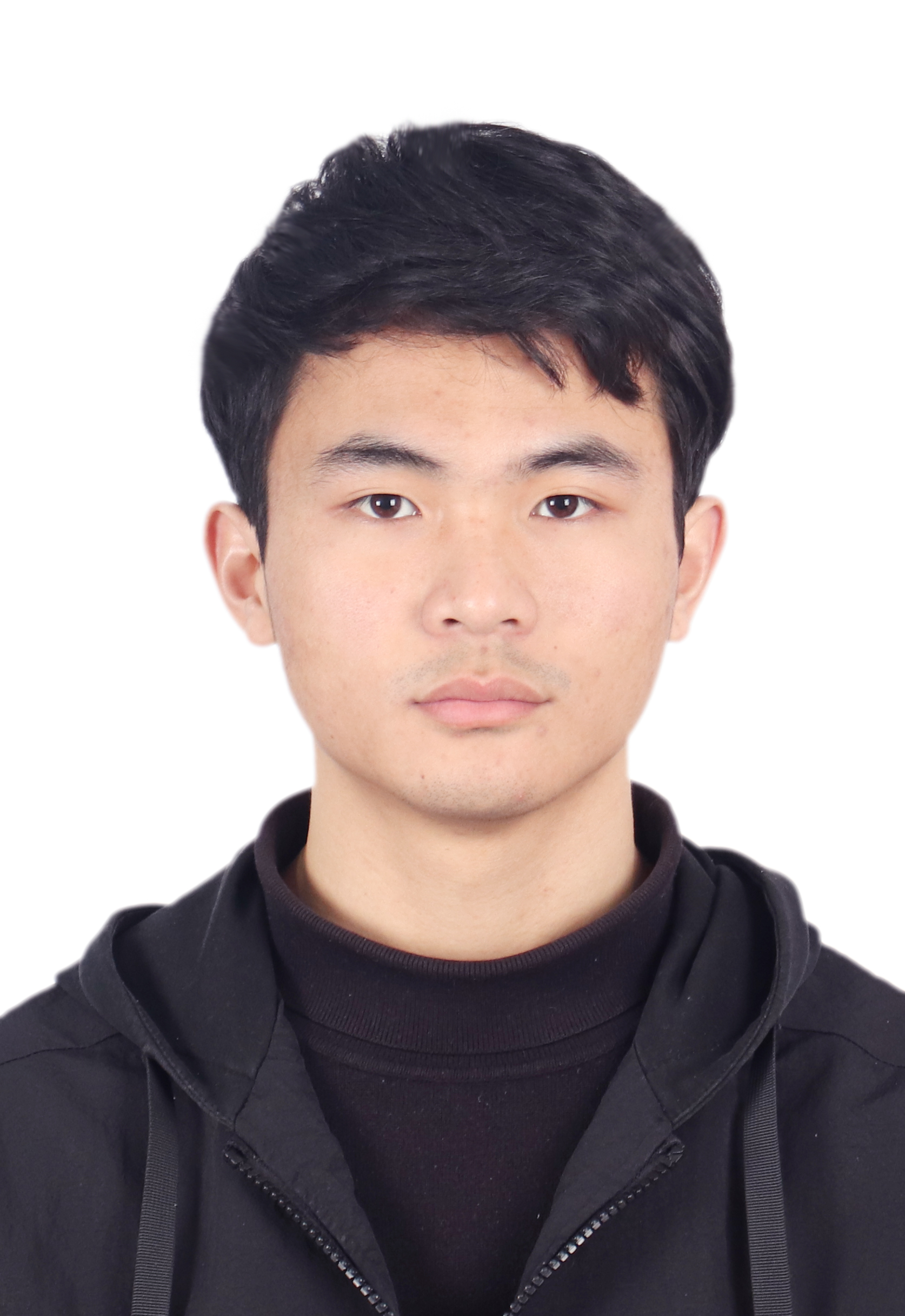}}]
	{Jinzhuo Liang} received the B.E. degree in Electrical engineering and automation from Henan Polytechnic University, Jiaozuo, China, in 2022. He is currently working toward the M.E. degree in Aritifical intelligence at Shanghai University of Electric Power, Shanghai, China. 
	
	His current research interests include robotics, force control, and high-performance and high-reliability assembly.
\end{IEEEbiography}
\vspace{-0.9cm}
\begin{IEEEbiography}[{\includegraphics[width=1in,height=1.25in,clip,keepaspectratio]{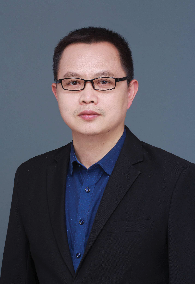}}]
	{Shihua Li} (Fellow, IEEE) received his B.S., M.S. and Ph.D. degrees in automatic control from Southeast University, Nanjing, China, in 1995, 1998, and 2001, respectively. Since 2001, he has been with the School of Automation, Southeast University, where he is currently a Full Professor and the Director of Mechatronic Systems Control Laboratory. His main research interests lie in modeling, analysis, and nonlinear control theory with applications to mechatronic systems, including robots, AC motors, engine control, power electronic systems, and others.
\end{IEEEbiography}
\vspace{-0.9cm}
\begin{IEEEbiography}[{\includegraphics[width=1in,height=1.25in,clip,keepaspectratio]{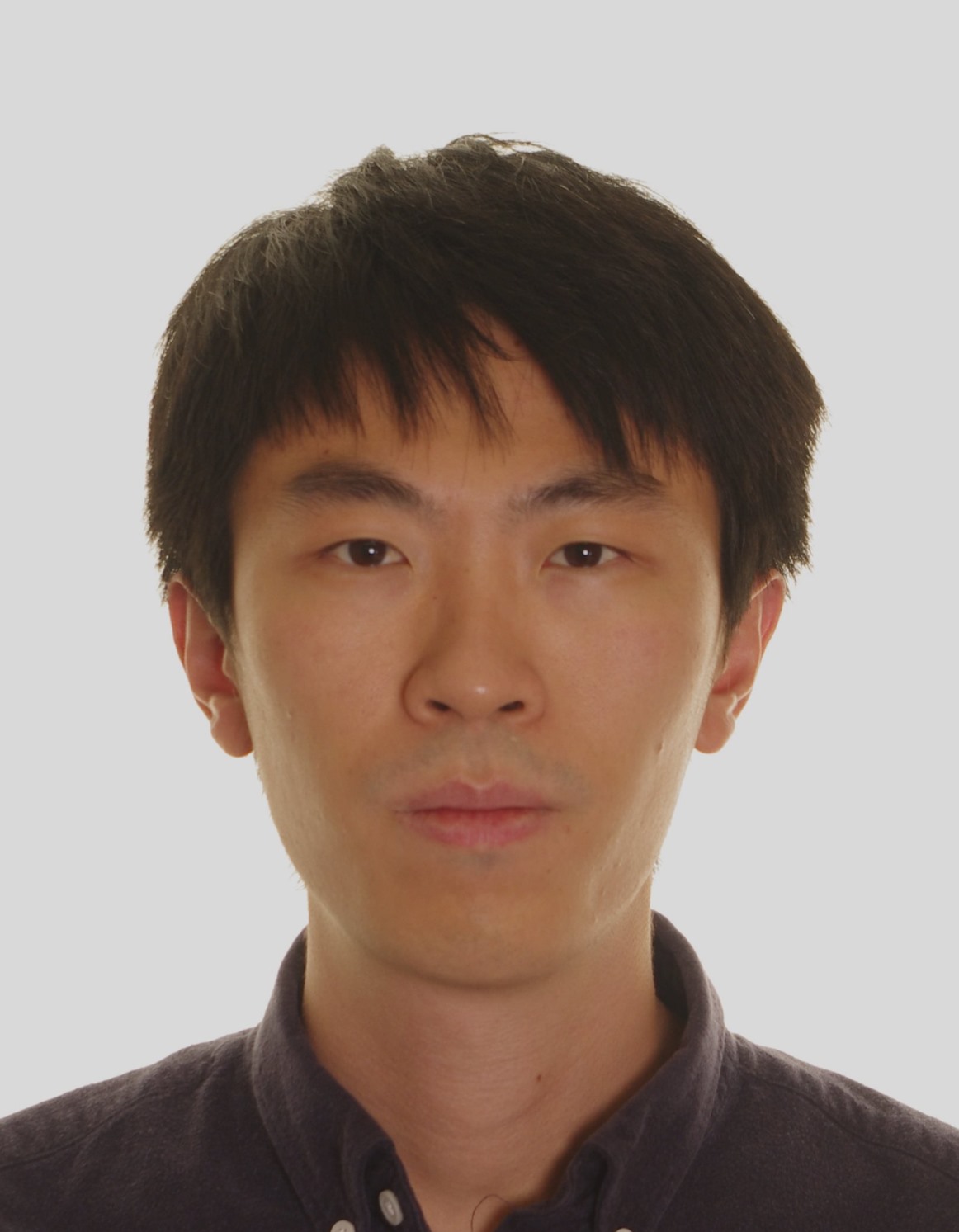}}]
	{Yunda Yan} (Member, IEEE) received the BEng degree in automation and the Ph.D. degree in control theory and control engineering from the School of Automation, Southeast University, Nanjing, China, in 2013 and 2019, respectively. From 2016 to 2018, he was a visiting scholar in the Department of Biomedical Engineering, National University of Singapore, Singapore and the Department of Aeronautical and Automotive Engineering, Loughborough University, UK, respectively. From 2020 to 2022, he was a Research Associate with the Department of Aeronautical and Automotive Engineering, Loughborough University, UK. From 2022 to 2023, he was with the School of Engineering and Sustainable Development, De Montfort University, UK, as a Lecturer in Control Engineering and was later promoted to a Senior Lecturer. In September 2023, he joined the Department of Computer Science, University College London, UK, as a Lecturer in Robotics and AI. His current research interest focuses on the safety-guaranteed control design for robotics, especially related to optimization, data-driven, and learning-based methods.
\end{IEEEbiography}
\vfill

\end{document}